\theoremstyle{definition}
\newtheorem{theorem}{Theorem}[section]
\newtheorem{lemma}{Lemma}[section]
\newtheorem{corollary}{Corollary}[section]
\DeclareMathOperator{\R}{\mathbb{R}}
\newcommand{\bis}{\textsc{Phased-SE}}
\newcommand{\bie}{\textsc{UCB-Revisited$+$}}
\newcommand{\ucbr}{\textsc{UCB-Revisited}}
\newcommand{\ucb}{\textsc{UCB1}}
\newcommand{\moss}{\textsc{MOSS}}
\newcommand{\aae}{\textsc{SE}}
\newcommand{\odaaf}{ODAAF}
\newcommand{\narms}{K}
\newcommand{\arms}{\mathcal{K}}
\newcommand{\indicator}{\mathbb{I}}
\newcommand{\armsm}{\mathcal{K}_m}
\newcommand{\armsmp}{\mathcal{K}_{m+1}}
\newcommand{\estmeanmj}{\overline{X}_{m,j}}
\newcommand{\dtm}{\tilde{\Delta}_m}
\newcommand{\dmax}{d_{\max}}
\newcommand{\expd}{\mathbb{E}[d]}
\newcommand{\bucketsm}{\mathcal{B}_{m}}
\newcommand{\bucketl}{B_l}
\newcommand{\bucketlnew}{B_{l}^{'}}
\begin{document}
\title{Bandits with Temporal Stochastic Constraints}
\author{Priyank Agrawal${}^{1}$ and Theja Tulabandhula${}^{2}$\\
${}^{1}$Indian Institute of Science\\
${}^{2}$University of Illinois at Chicago}

\maketitle

\begin{abstract}
We study the effect of impairment on stochastic multi-armed bandits and develop new ways to mitigate it. Impairment effect is the phenomena where an agent only accrues reward for an action if they have played it at  least a few times in the recent past. It is practically motivated by repetition and recency effects in domains such as advertising (here consumer behavior may require repeat actions by advertisers) and vocational training (here actions are complex skills that can only be mastered with repetition to get a payoff). Impairment can be naturally modelled as a temporal constraint on the strategy space, and we provide two novel algorithms that achieve sublinear regret, each working with different assumptions on the impairment effect. We introduce a new notion called bucketing in our algorithm design, and show how it can effectively address impairment as well as a broader class of temporal constraints. Our regret bounds explicitly capture the cost of impairment and show that it scales (sub-)linearly with the degree of impairment. Our work complements recent work on modeling delays and corruptions, and we provide experimental evidence supporting our claims.
\end{abstract}

\section{Introduction}\label{sec:introduction}

The multi-armed bandit (MAB) problem captures exploration-exploitation tradeoff in sequential decision making. In these problems, the algorithm is tasked with learning which arm (action) is the best while simultaneously exploiting its information to obtain rewards. It has been extensively studied in various disciplines such as operations research, statistics and computer science and various solution strategies have been used successfully in domains such as advertising, recommendation systems and clinical trials. In a typical problem, stochastic reward distributions are associated with each arm in a set of arms $\arms$. When the algorithm plays arm $j$, it immediately receives a reward with which it can learn, and also suffers a regret, which is the difference between the obtained reward versus the reward that it could have obtained, had it played the best arm.


In one of the key applications of MAB, viz., advertising, there has been a large body of work that models consumer behavior~\citep{hawkins2009consumer,solomon2014consumer}. One such effect that is relatively well studied is the repetition effect~\citep{machleit1988emotional,campbell2003brand}, under which, an advertiser's payoff (for instance, click through rate) depends on how frequently they have presented the same ad to the same audience in the recent past. If the advertiser presents a specific ad sporadically, then the aggregated payoff is much lower. If this ad is the best among a collection of ads, then the advertiser will not be able to deduce this from their observations. Further, different ads may need different levels of repetition to obtain payoffs, and this may not be known a priori to the advertiser. This phenomenon also translates to recommendations, such as for products and movies, where repeated display of item(s) can cause positive reinforcement to build over time, culminating in a conversion. And since conversions depend on the past recommendations, they interfere with learning the true underlying (mean) payoffs of different recommendations. In the domain of skill acquisition~\citep{dekeyser2007skill}, especially those which are complex~\citep{bosse2015benefit}, a learner may have to repeatedly try each action several times to advance or acquire a reward. Further, they may have to repeat these actions frequently enough so as to not lose the acquired skill~\citep{kang2016spaced}. Under this effect, their ability to learn the best action (or set of actions) may be severely impaired.

Motivated by the above discussion, we define a new class of problems, which we call \emph{bandit learning with stochastic impairment}, to address the need for repetitions. The defining characteristic here is that a learning algorithm can only accrue rewards if it has played the same arm enough number of times in the recent past. The amount by which the algorithm needs to replay the same arm is a measure of impairment (see Section~\ref{sec:problem_definition} for a formal treatment). When the arm has been played enough times in the past, then the reward for playing that arm at the current time is instantaneous. In other words, the reward accrued by the algorithm is a function of the current reward generated and the sequence of arms played in the recent past. A diagram illustrating this is shown in Figure~\ref{fig:1}. Our formulation is an example of combining empirically validated behavioral models with sequential decision making, and a few other examples are discussed towards the end of this section.

\begin{figure}
\centering
\includegraphics[width=.5\textwidth]{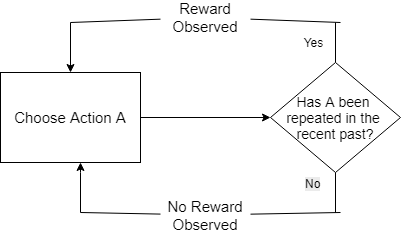}
\caption{Bandit learning with stochastic impairment.}
\end{figure}\label{fig:1}



In the presence of impairment effect, usual MAB algorithms such as \ucb, \aae, \moss{} or Thompson Sampling are ineffective because one cannot directly control the number of times an arm is played in a given time window. In fact, if we have an instance in which a couple of arms have almost equal mean payoffs, then the aforementioned algorithms may switch between these very frequently (see Section~\ref{sec:experiments}), potentially causing linear regret. To address these issues, we develop two algorithms, \bis{} and \bie, which expand on the phase-based algorithmic template to mitigate impairments in reward accrual. Central to the \bis{} algorithm is the notion of \emph{bucketing}. The algorithm divides the time horizon into several phases and eliminates arms between phases. In addition, within each phase, it also partitions the existing arms into buckets. This partitioning ensures that arms in each bucket are repeated often. It also lets us use an arm elimination strategy based on \aae{}~\citep{even2006action} to incentivise exploration-exploitation, and facilitates arm elimination  within phases. As we show in our regret upper bound derivations, having buckets has a non-trivial effect on the leading order terms, and also smoothly degrades to a phase-based algorithm's performance (such as that of \bie{} in Section~\ref{sec:expectation_known}) depending on the instance. Our bucketing strategy is quite general and can be used not only in settings with endogenous effects such as the above, but also in settings with exogenous effects. For instance, in a variant of the \emph{distributed bandit learning} setting~\citep{hillel2013distributed,buccapatnam2015information,gamarnik2018delay}, one may have to partition the set of arms such that each node can only play its local set. These nodes then have to communicate with each other and minimize regret. Our bucketing strategy works out of the box for this, where communication between nodes happens between phases, and within a phase, nodes run the arm elimination strategy discussed above. For simplicity of exposition, we focus on impairment as the main setting in the rest of the paper.



A unifying view of our contribution in the stochastic MAB landscape, and recent works such as~\citep{pike2018bandits} and ~\citep{lykouris2018stochastic} is as follows. In all three works, one can assume that there is a intermediate function that allows an MAB algorithm to accrue a transformation of the rewards. \cite{pike2018bandits} address a challenging setting where the algorithm does not accrue a reward for an arm played immediately. Instead it accrues a linear combination of components of rewards of a stochastic number of arms played in the recent past. \cite{lykouris2018stochastic} address the setting where the algorithm accrues the stochastic reward most of the time, but an adversary can corrupt some of the rewards arbitrarily, and the algorithm accrues this incorrect reward as well. In our case, we accrue rewards that are modulated by the sequence of actions that our algorithm took. These transformations from true rewards generated to accrued rewards that depend on various factors make the learning problem more challenging. 

Our setting is very different than the corruption effect studied in~\citep{gajane2017corrupt} and~\citep{lykouris2018stochastic}. \cite{gajane2017corrupt} exploit the corruptions in the reward process as a means to preserve privacy. On the other hand, \cite{lykouris2018stochastic} consider settings with arbitrary exogenous corruptions and propose a randomized algorithm (based on \aae) that achieves smooth degradation as corruption level increases, and recovers the vanilla MAB guarantee when there is no corruption. Unlike their setting, impairment effect is endogenous and correlates past actions with the current reward. While the amount of impairment in our setting (if one uses a phase based strategy) would be equivalent to O($d_{\max} K\log T$) amount of corruption (see Section~\ref{sec:support_known}), we cannot reuse their techniques because of endogeneity. Note that corruption can be unbounded if one uses a regular MAB algorithm such as \ucb.

A stream of works have look at the impact of delayed rewards on learning and regret. In these settings, an algorithm may accrue a reward for its action after a (random) duration of time, possibly mixed with other delayed rewards. For instance, ~\cite{perchet2016batched} consider a setting in which actions are played in batches and rewards are only available at some fixed time points. Works such as~\citep{joulani2013online,cesa2018nonstochastic} have quantified the impact of delay on regret. In particular, ~\cite{joulani2013online} provide a recipe to use any regular MAB algorithm in this setting and show that delay causes an additive regret penalty. In~\citep{cesa2018nonstochastic} for adversarial bandits and~\citep{pike2018bandits} for stochastic bandits, the authors provide regret guarantees for a much weaker setting where rewards can get mixed up and may partially accrue over time. That is, components of rewards due to multiple previous actions may appear collectively. In contrast to the delay effect, which is affecting the future accrual of rewards, the impairment effect can be viewed as being caused by the trajectory of past actions. In both settings, using a phase based algorithmic strategy can mitigate the corresponding effect: for delay, repeating an arm can allow the algorithm to map the reward to the right arm that generated it; for impairment, repeating an arm can allow the algorithm to accrue instantaneous rewards as the arm has been played multiple times in the recent past.


\subsection{Our Results and Techniques}

Owing to the nature of impairment effect, algorithms necessarily have to ensure that the arms still under consideration are played frequently, perhaps in batches. In particular, if one plays an arm for a sufficiently long period of time, then they can ensure that the instantaneous rewards are always accrued (except some at the beginning). Phase based algorithms are a natural choice given these considerations. This family of algorithms date back to ~\citep{agrawal1988asymptotically}, who considered arm switching costs. Our algorithms, \bis{} and \bie, follow this design pattern, wherein the focus is to eliminate arms between stages where each arm is played consecutively for multiple rounds. In particular, \bie{} is a based on \ucbr{}~\cite{auer2010ucb} and works under the setting when the expected impairment effect is known. On the other hand, \bis{} extends \bie{} and incorporates the notion of bucketing to further explore and exploit within phases. It assumes knowledge of an upper bound on the stochastic impairment effect. Naturally, both algorithms also work when the impairment is deterministic.

Following the analysis techniques of~\citep{auer2010ucb} and ~\citep{pike2018bandits}, where the latter uses Freedman's inequality and the  Azuma-Hoeffding inequality based Doob's optimal skipping theorem (see Section~\ref{sec:expectation_known}), we are able to show additive influence of impairment on worst case regret. While our analysis partially overlaps with these two previous works, the random variables related to impairment in our setting do need a qualitatively different treatment. Assume that an algorithm can accrue a reward for pulling arm $j$ at time $t$ if it has been tried at least $d_{t,j}$ times in the past $N$ rounds, where $d_{t,j}$ is a random variable with mean $\expd$ and upper bound $\dmax$ and $N$ is a fixed positive integer. Then, the regret upper bounds of our methods are listed in Table~\ref{tab:introresults}. The impact of impairment is sublinear for \bie{} because of a tighter analysis.

\begin{table}
    \centering
\resizebox{\columnwidth}{!}{
\begin{tabular}{ |c||c|c|  }
\hline
Algorithm\textbackslash Setting   &   Without impairment   &  With impairment\\
\hline
Lower bound    &   $\mathrm{O}(\sqrt{KT})$    & -\\
\ucb     &   $\mathrm{O}(\sqrt{KT\log T })$    & -\\
\moss    &   $\mathrm{O}(\sqrt{KT})$          & -\\
\bis    & $\mathrm{O}(\sqrt{KT\log T })$     & $\mathrm{O}\left(\sqrt{KT\log T} +\sqrt{\frac{K^3\log^3(T)}{T}}\dmax\right)$\\
\bie    & $\mathrm{O}(\sqrt{KT\log T })$     & $ \mathrm{O}\left(\sqrt{KT\log T} + K\sqrt{\log^2 T \expd}\right)$\\
  \hline
\end{tabular}\label{tab:results}
}
\caption{Results summary.}
\label{tab:introresults}
\end{table}


\subsection{Related work}

\cite{shah2018bandit} study bandit problems under a different behavioral effect rooted in microeconomic theory, namely self-reinforcement. In their setting, the algorithm interacts with a sequence of incoming users, who develop preference to the arms that have been played in the past. This positive externality affects the performance of algorithms, where the best arm's performance may get overshadowed by that of a suboptimal arm due to reinforcement. In our case, the frequency of arm plays in the recent past influences the rewards accrued for that arm, which is a different type of externality.

Impact of imperfect memory, or the difficulty of remembering the observed rewards of arms, on an algorithm's performance was studied in~\citep{xu2018reinforcement} and~\citep{gamarnik2018delay}. The models are in continous time and either the rewards are associated with certain arrival and departure processes~\cite{xu2018reinforcement}, or the incoming set of actions are sent to a distributed set of nodes~\cite{gamarnik2018delay}. While the spirit of these is similar to our work, they are not closely related to the discrete time bandit learning setting. Our work is one of the first to address impairment in this case, where impairment can be interpreted in terms of constraints on memory.

Bandits with switching costs have also been studied extensively~\cite{banks1994switching,dekel2014bandits} and consider penalties for switching arms too often. Phase based strategies such as the algorithms that we propose can also be considered as solutions to this problem. Impairment effect imposes a hard constraint on switching arms too frequently, which is approximately equivalent to having large switching costs.

\cite{shivaswamy2012multi} consider the setting where the algorithm has access to previous plays of each arm and focus on warm-starting the exploration-exploitation process. This could be useful in our setting, where one can partition the initial set of arms into buckets such that impairment is mitigated, learn the local best arms in each bucket. Next, these best arms can be combined together (hierarchically agglomerate) and their historical plays can be used to warm-start. A rigorous analysis of this heuristic is a possible future work. In a similar vein of learning with previously collected observational data, ~\cite{swaminathan2015counterfactual} focus on counterfactual risk minimization. While these works work with a one-time use of historical data to improve future payoffs, in our setting we work around the continual impact of historical actions on future payoffs.~\cite{maillard2011adaptive} consider the problem of history-dependent arm selection against possibly adaptive adversaries, which is a much weaker setting compared to ours.

The notion of bucketing incentivises exploration and exploitation within each phase (see Section~\ref{sec:support_known}). And this is an important attribute when the environment and specifications are changing. For instance,~\cite{kleinberg2010regret} introduce the sleeping bandits problem, where the set of available arms changes with time and the algorithm needs to balance exploration and exploitation when new actions are made available. Similar models have also been considered in~\citep{chakrabarti2009mortal,kanade2014learning}. In~\citep{garivier2008upper,garivier2011upper}, the authors assume piece-wise stationarity of the rewards distributions (i.e., the reward distributions abruptly change at certain breakpoints in the time horizon).Under this model, they propose algorithms that estimate mean rewards of arms using only recent history (e.g., using a sliding-window or by discounting). However, none of these overlap with or subsume the impairment effect considered in this work.


\section{Problem Definition}\label{sec:problem_definition}

There are $\narms > 1$ arms in the set $\arms$. Each arm $j \in \arms$ is associated with a reward distribution $\xi_j$, which has support $[0,1]$. The mean reward for arm $j$ is $\mu_j$. $\mu^*$ is the maximum of all $\mu_j$ and corresponds to the arm $j^*$. Define $\Delta_j = \mu^* -\mu_j$. Additionally, let $\epsilon := \underset{j,k \in \arms \setminus\{j^*\} : \Delta_j \leq \Delta_k}{\max}\frac{\Delta_j}{\Delta_k}$. This instance dependent parameter captures how dissimilar the arms are with respect to each other. It also influences the performance of our algorithm in Section~\ref{sec:support_known}. In particular, a smaller value of $\epsilon$ corresponds to an easy instance(i.e., the true means of arms are well separated), whereas a larger value corresponds to a relatively difficult instance (the true means are close to each other). For easy instances, our algorithm's regret guarantee is strictly better than a benchmark (see Section ~\ref{sec:support_known} for more details).

One of the key aspects of this work is a novel modeling of \emph{impairment} that temporally correlates the rewards accrued by any algorithm with its past sequence of actions. Each arm $j$ comes with an i.i.d. stochastic process $\{d_{t,j}\}$ that controls reward accrual in the following way: let $R_{t,j}$ be the reward that would be generated if the $j^{th}$ arm is played at time $t$. Further, let $J_t \in \arms$ represent the arm that is played at time $t$. The reward accrued by the algorithm at time $t$ is not $R_{t,J_t}$ but the following value: 

\begin{equation*}
    X_{t,j} =  R_{t,j} \indicator\left[ \left(\sum_{k= \max(t-N,0)} ^t \indicator[ J_k = j]\right) \geq d_{t,j} \right],
    \label{eq:feedback}
\end{equation*}
where $\indicator[]$ is the indicator function, $N \in \mathbb{N}$ is a instance specific parameter, and for all $j \in \arms$, the random variables $d_{t,j}$ are supported on the set $\{0,1,...,\dmax\}$ with $\dmax \leq N$. For simplicity, we assume that the mean of each random variable $d_{t,j}$ is equal to $\expd$.

Intuitively, the algorithm only accrues reward for playing arm $j$ at time $t$ if it has played the same arm at least $d_{t,j}$ number of times in the past $N$ rounds. Thus, the bandit instance in our work has additional stochastic processes and a parameter $N$ for modeling impairment. 

Given this context, our objective is to design an online algorithm that receives the temporally dependent bandit feedback shown in~(\ref{eq:feedback}) and minimizes expected (pseudo-)regret. That is, for a given time horizon of plays $T$, the algorithm chooses a sequence $\{J_t\}_{t=1}^{T}$ and is scored by:
\begin{equation}\label{eq:regret}
\begin{split}
    R_T &= \max_{ k \in [K]} \mathbb{E}\left[\sum_{t=1}^T X_{t,k} \right] - \mathbb{E}\left[ \sum_{t=1}^{T} X_{t,J_t}\right].
\end{split}
\end{equation}

Unlike the standard setting, the above terms cannot be further simplified because the random variables $\{X_{t,j}\}$ depend on both the instantaneous rewards as well as the historical sequence of arms played. Further, due to impairment, even if one knows the optimal arm $j^*$, they may still receive no rewards for O$(\dmax)$ number of rounds. We assume that the algorithm can differentiate between accruing a reward and and not accruing a reward (i.e., not count the latter as $0$ reward).

Given the above model, one can deduce that methods such as UCB1~\citep{auer2002finite}, MOSS~\citep{audibert2009minimax} or Thompson Sampling~\citep{chapelle2011empirical} will not succeed for any reasonable values of $\expd,\dmax$ and $N$. For instance, the UCB1 algorithm, which is based on the optimism principle, adaptively decreases its optimism over mean rewards such that arms are easily distinguishable from each other. But while doing this, it does not allow for a direct control on how arms switch between rounds. When $N \rightarrow \infty$, the impairment effect fades away completely and the above algorithms will perform well. To control the switching between arms, one could employ algorithms such as \aae~\citep{even2006action} or UCBR~\citep{auer2010ucb}, which either play arms in a predictable round robin fashion or play the same arm consecutively. For instance, when $N = \dmax$, an algorithm such as UCBR is necessary but not sufficient. In this work, we build on the family of \emph{phase based} algorithms, which include UCBR and ODAAF~\citep{pike2018bandits}, and propose a new algorithm that is able to control the arm switching behavior using a novel \emph{bucketed} strategy. This algorithm works under the setting when only $\dmax$ and $N$ are known (see Section~\ref{sec:support_known}). Further, we also show how a minor modification of UCBR/ODAAF is also a very effective regret minimizing strategy. This latter algorithm works when $\expd$ and $N$ are known, and comes with a tighter regret upper bound (see Section~\ref{sec:expectation_known}).

\section{Impairment with Known Support}\label{sec:support_known}

We introduce a new algorithmic innovation (the idea of \emph{bucketing}) that significantly extends the phased-based UCB algorithm (UCBR) ~\cite{auer2010ucb}, which has been extensively studied and extended by a number of works such as~\cite{hillel2013distributed,bui2011committing,buccapatnam2014stochastic,pike2018bandits}. Many optimism based algorithms (for instance, variants of UCB1) tend to switch arms a lot, especially when there are multiple (near-)optimal arms present. In contrast, a phase based structure enables one to control the temporal pattern of arms being played and limit switching. In previous works, this pattern was limited to consecutively playing the same arm multiple times. For instance, in~\cite{pike2018bandits}, the authors exploit the repetition to tackle delay and anonymous aggregation of rewards. 

Using the algorithmic template of UCBR can itself let us meet the impairment constraint (playing $d_{t,j}$ number of times in the past $N$ rounds). But since each arm is played consecutively, there is little exploration and exploitation within a phase. For instance, say there are $\armsm$ arms in phase $m$. Then each arm is played an equal number of times successively in a block of rounds, one after the other. Instead, if we reassign the number of times each arm is played so that we spend more rounds on those arms that are more promising, then our regret would be lower. We achieve this in our algorithm by playing arms in a round-robin fashion, with exploration and exploitation similar to the \aae{} algorithm~\cite{even2006action}, while ensuring that the impairment constraint is not violated. This enables us to eliminate arms during a phase, in addition to elimination between phases. Our proposed algorithm, \bis{} is described in Algorithm~\ref{alg:support_known}.

We execute the round-robin strategy described above in \bis{} by first constructing multiple subsets of arms in each phase, called \emph{buckets}, and then running \aae{} on each bucket separately. Thus, at the beginning of each phase, the arms that have not been eliminated yet, are divided up into buckets. Each bucket of arms runs a round-robin arm elimination strategy (\aae, see Algorithm~\ref{alg:aae}) that incentivizes exploration and exploitation, with the net result being that: (a) arms which are more promising in each bucket are played more, (b) the impairment constraints are satisfied due to bucket sizing, and (c) the overall regret is reduced. In our performance guarantee (Theorem~\ref{thm:support_known}), this effect of bucketing is a leading order effect. Bucketing allows our algorithm to work on a strictly larger class of impaired bandit instances when compared to a direct phase based strategy, because if the impairment constraint is tighter than the phase length (which is typically an algorithm parameter), then off-the-shelf phase base strategies will incur linear regret. Finally, note that not only is bucketing helpful in meeting an impairment constraint, it is also useful in general distributed online learning settings (with bandit feedback), where nodes can only play subsets of arms for extended periods of time and communicate with each other to identify the best-arm/minimize regret.

Reward accrual depends on how the buckets are sized. We set the size conservatively to have at most $N/\dmax$ number of arms. The arms in buckets can get eliminated before the end of the phase in addition to the possibility of getting eliminated between phases. Within the phase, an arm will get eliminated from its bucket if its mean reward is different from the estimated best arm in that bucket. Whereas, an arm will get eliminated between phases if its mean reward is different from the overall estimated best arm. As the number of phases increase, the estimates of mean rewards of the surviving arms improve. The algorithm uses a monotonically decreasing sequence of thresholds to detect sub-optimal arms, which is a standard design in phase-based methods. Controlling the thresholds, the phase lengths and the number of buckets leads to a sub-linear regret for our algorithm in the impaired bandit setting (Theorem~\ref{thm:support_known}). While the algorithm takes the time horizon $T$ as input, this can be relaxed using the \emph{doubling trick}, which is well known in the literature.\\

\noindent\textbf{Notation}: Let $m$ index phases. Let $T_j(m)$ refer to the collection of times when the $j^{th}$ arm is played up to phase $m$ and $n_{t,j}$ denote the number of times the arm $j$ was chosen till time index $t$. As discussed in earlier sections, $J_t$ refers to the arm pulled at time $t$ and the reward sequence till time index $t$ is given by $\{X_{k,J_k}\}_{k=0}^t$. The sequence of parameters $\{n_m| m=0,1,2,...\}$ determine the number of consecutive rounds each active arm is played in phase $m$, where active arms belong to the set $\armsm$. The estimated mean reward for arm $j$ at the end of phase $m$ is denoted by $\estmeanmj$. We also denote the collection of buckets in phase $m$ by $\bucketsm$. In the pseudo-code description of both the Algorithms~\ref{alg:support_known} and~\ref{alg:aae}, we assume that the flow exits whenever time index crosses the time horizon.\\

\begin{algorithm}
\SetAlgoLined
\textbf{Input:} A set of arms $\arms$, time horizon $T$, and parameters $\{n_m| m=0,1,2,...\}$.\\
\textbf{Initialization:} phase index $m=1$, $\armsm=\arms, \tilde{\Delta}_1 = 1$, $T_j(m-1) = \phi\;\; \forall j \in \arms$, and time index $t= 1$.\\
\For{ $t \leq T$}{

\emph{Within Phase Elimination}:\\
Set $\armsmp = \phi$\\
Partition $\armsm$ into a collection of buckets $\bucketsm$ of size at most $|\armsm|/(N/\dmax)$.\\
\For{each bucket $\bucketl$ in $\bucketsm$ }{
Call \aae{} (Algorithm~\ref{alg:aae}) with inputs: set of arms $\bucketl$, internal time horizon $\frac{N}{d}(n_m - n_{m-1})$,$t$ (time offset), $T_j(m-1)\;\; \forall j \in \bucketl$, and overall time horizon $T$.\\
Get outputs: $\bucketlnew \subseteq \bucketl$, $(\{X_{t,J_t}\}_{t \in T_j(m)\setminus T_j(m-1)},T_j(m)),\;\; \forall j \in \bucketlnew$, and new time index $ t^`$.\\
$t \leftarrow t^`$.\\
$\armsmp \leftarrow \armsmp \cup \bucketlnew$.
}
\emph{End of Phase Elimination:}\\
\For{each active arm $j$ in $\armsm$}{
$\quad \estmeanmj = \frac{1}{n_m}\sum_{t\in T_j(m)} X_{t,J_t}$.
}
\For{each arm $j$ in $\armsmp$}{ 
Eliminate it from $\armsmp$ if: \\
\quad $ \estmeanmj +\frac{\tilde{\Delta}_m}{2} < \max_{j' \in \armsmp } \overline{X}_{m,j'} - \frac{\tilde{\Delta}_m}{2}$.\\
}
\emph{Update the confidence bound:}\\
Set $\tilde{\Delta}_{m+1} = \frac{\dtm}{2}$.\\
Increment phase index $m$ by $1$.\\
}
\caption{\bis}\label{alg:support_known}
\end{algorithm}


\noindent\textbf{Gains from Bucketing}: The choice of the size of the bucket is conservative and ensures that all arms in a bucket satisfy the impairment constraint by construction and at most O($\dmax$) rewards are lost per arm. The parameter sequence $\{n_m| m=0,1,2,...\}$ is chosen such that the comparison step between phases, involving progressively shrinking internal thresholds ($\{\dtm\}$), eliminates suitable suboptimal arms with high probability. In particular, those arms still present in phase $m$, whose regret gaps satisfy $\Delta_i/2 > \dtm $, should ideally get eliminated.

As mentioned before, running \aae{} (Algorithm~\ref{alg:aae}) in each phase incentivizes exploration and exploitation with the possibility of eliminating arms that are found suboptimal within a bucket during the phase. \aae{} was proposed under the name \emph{Successive elimination} in~\cite{even2002pac} as a (PAC) best-arm identification algorithm and has been further developed and applied in various subsequent works ~\citep{even2006action,audibert2010best,gupta2011successive,lykouris2018stochastic}. In this algorithm, arms are played in a round robin fashion, and in each round their estimated mean reward plus a confidence bound is compared against the estimated best. A negative outcome of comparison results in the arm getting eliminated.

\begin{algorithm}
\SetAlgoLined
\textbf{Required Inputs:} A set of arms $\arms$, internal time horizon $H$.\\
\textbf{Optional Inputs:} Time offset $t$, previous counts $T_j\;\; \forall j \in \arms$, and overall time horizon $T$.\\
\textbf{Outputs:} Set of arms $\arms'$, accrued rewards $(\{X_{u,J_u}\}_{u \in T_j^{'}\setminus T_j},T_j^{'})\;\; \forall j \in \arms'$, and new time index $(t+s)$\\
\textbf{Initialization:} If optional inputs are not provided, then $s = 0, T_j=\phi \;\forall j \in \arms$, and overall time horizon $T = H$.\\
\quad Local round index $s = t+1$, $\arms' = \arms$, local counts $L_j = 0 \;\forall j \in \arms$. $T_j^{'}=T_j \;\forall j \in \arms$.\\
\While{$s\leq ( T + t)$ }{
\For{each active arm $j$ in $\arms'$}{
Play arm $J_s = j$.\\
$\overline{X}_{s,j} \leftarrow  \overline{X}_{s-1,j}$.\\
\If{ $\left(\sum_{k= \max(s-N,0)} ^s \indicator[ J_k = j]\right) \geq d_{s,j}$}{
$\quad L_j \leftarrow L_j +1$.\\
$\quad \overline{X}_{s,j} \leftarrow \frac{(L_j-1)\overline{X}_{s-1,j} + X_{s,j}}{L_j}$.\\
$\quad T_j^{'} \leftarrow T_j^{'}\cup\{t+s\}$, and also record the accrued reward $ X_{s,j}$.
}
Eliminate arm $j$ from $\arms'$ if $\overline{X}_{s,j} + \sqrt{\frac{\log T}{|T_j^{'}|}} < \max_{j' \in \arms' } \overline{X}_{s,j'} - \sqrt{\frac{\log T}{|T_{j'}^{'}|}}$.\\
$s=s+1$.\\
}
}
\caption{ Successive Elimination (with minor modifications): \aae}\label{alg:aae}
\end{algorithm}

The value of $\epsilon$, which is a property of the input instance,  governs the regret gap when \aae{} is run inside each bucket. When the bucket has largely dissimilar arms, the value of $\epsilon$ is small and \bis{} is able to explore/exploit via the buckets and eliminate suboptimal arms relatively quickly. On the other hand, when all arms have the same mean reward in a bucket, the \aae{} strategy is equivalent in performance to that of consecutively playing the same arms. In each bucket, potentially O($\dmax$) number of rewards are not accrued due to impairment (the same happens when arms are played consecutively as well). We show experimentally, that additional exploration induced by bucketing leads to a non-negligible effect in decreasing cumulative regret (see Section~\ref{sec:experiments}). We also note in passing that we did not impose any structure other than a size constraint while partitioning $\armsm$ into buckets in \bis. There could be different motivations (see Section~\ref{sec:introduction} for several motivating example) that may lead to preferred partitioning schemes, and our algorithm is general enough to be useful in those settings as well.

\begin{lemma}\label{lemma:aae}
In Algorithm~\ref{alg:aae}, the expected number of times a suboptimal arm pulled is bounded as: $\mathbb{E}[ | T_j' | ] \leq 1 + \frac{16\log(T)}{\Delta_j^2}+\frac{2}{T}$ for $s\leq T$, where $T$ is the overall time horizon, $|T_j'|$ maintains the current count of the number of times that arm $j$ was chosen and $\Delta_j$ is the regret gap.
\end{lemma}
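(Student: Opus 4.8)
The plan is to adapt the classical Successive Elimination analysis (as in \cite{even2006action}) to the setting where not every pull produces an accrued reward, because of the impairment indicator in~(\ref{eq:feedback}). First I would fix a suboptimal arm $j$ and condition on the ``good event'' that all the confidence intervals used by Algorithm~\ref{alg:aae} are valid, i.e., that for every arm $i$ and every count value $n$, the empirical mean $\overline{X}_{s,i}$ computed from $n$ accrued samples satisfies $|\overline{X}_{s,i} - \mu_i| \le \sqrt{\log(T)/n}$. A union bound over arms and over the at most $T$ possible count values, using Hoeffding's inequality on the $[0,1]$-bounded rewards $R_{s,i}$ (note the accrued rewards, when the indicator fires, are genuine i.i.d.\ draws from $\xi_i$, so the samples feeding $\overline{X}_{s,i}$ are i.i.d.), shows this event fails with probability at most $2KT \cdot T^{-2}$, contributing the $O(1/T)$-type slack; being slightly careful with constants is what yields exactly the stated $\tfrac{2}{T}$ term.

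Next, on the good event, I would argue that arm $j$ cannot survive once its number of \emph{accrued} samples $|T_j'|$ exceeds $16\log(T)/\Delta_j^2$. The standard argument: if $|T_j'| = n > 16\log(T)/\Delta_j^2$ then $\sqrt{\log(T)/n} < \Delta_j/4$, and since $j^*$ (or whichever arm is optimal within the bucket) has at least as many accrued samples in a round-robin schedule up to an off-by-one, its confidence radius is also $< \Delta_j/4$; combining the two valid confidence bounds gives $\overline{X}_{s,j} + \sqrt{\log(T)/|T_j'|} < \mu_j + \Delta_j/2 = \mu^* - \Delta_j/2 < \overline{X}_{s,j^*} - \sqrt{\log(T)/|T_{j^*}'|}$, so the elimination test fires. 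Hence on the good event $|T_j'| \le 1 + 16\log(T)/\Delta_j^2$ (the $+1$ absorbing the round in which the last sample is accrued / the round-robin off-by-one). Finally I would remove the conditioning: with probability at most $2/T$ the good event fails, and on that event $|T_j'|$ is trivially at most $T$, so it contributes at most $T \cdot (2/T) = 2$ to the expectation, but a tighter bookkeeping — noting the bad event already ``costs'' only the residual samples beyond the deterministic bound — gives the additive $\tfrac{2}{T}$ as written; I would state it at the level of $\mathbb{E}[|T_j'|] \le 1 + 16\log(T)/\Delta_j^2 + 2/T$.

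The one genuinely non-routine point — and the place I expect to spend the most care — is the interaction between the impairment indicator and the round-robin counting. In Algorithm~\ref{alg:aae} an arm is \emph{played} every round-robin cycle but a sample is only \emph{accrued} when $\sum_{k=\max(s-N,0)}^{s}\indicator[J_k=j]\ge d_{s,j}$; one must check that within a bucket of size at most $N/\dmax$ the round-robin schedule guarantees each surviving arm has been played at least $\dmax\ge d_{s,j}$ times in any window of $N$ rounds, so that \emph{after an initial transient} of $O(\dmax)$ rounds every play of a surviving arm is accrued. This is exactly the bucket-sizing invariant asserted in the ``Gains from Bucketing'' paragraph, and it ensures that (i) the number of accrued samples of each surviving arm tracks its number of plays up to an $O(\dmax)$ additive loss, and (ii) the confidence-interval comparison between $j$ and the within-bucket best is made at comparable sample counts. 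The $O(\dmax)$ transient does not affect the bound on $|T_j'|$ itself — it only delays when samples start accruing — so it does not appear in this lemma, but one must be careful to phrase the argument so that this delay is immaterial to the elimination threshold; I would isolate this as a short invariant claim and then the rest of the proof is the textbook Successive Elimination computation.
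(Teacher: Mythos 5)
Your proposal is correct and follows essentially the same route as the paper's own proof: Hoeffding-based confidence intervals, the observation that the elimination test must fire once $\sqrt{\log(T)/|T_j'|} < \Delta_j/4$ (i.e.\ once $|T_j'|$ exceeds $16\log(T)/\Delta_j^2$), and a bad-event contribution of order $2/T$ to the expectation; your bookkeeping of the failure probability is no looser than the paper's own ``upon simplification'' step. The extra paragraph on the bucket-sizing invariant (plays versus accrued samples, with an $O(\dmax)$ transient) is a reasonable clarification that the paper omits from this lemma, and it does not change the bound.
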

\begin{proof}
Let $CB_{s,j} = \sqrt{\frac{\log(T)}{|T_j'|}}$ and $\mathcal{E}=\indicator{\{ \vert \overline{X}_{t,i}  -\mu_i \vert > CB_{s,j}\}}$. Firstly, we see that $\mathbb{P}(\mathcal{E})$ is upper bounded by Hoeffding's inequality by $\frac{2}{T^2}$. Now, for $T_j'$ satisfying $CB_{s,j} < \Delta_j/4$, the arm elimination condition of~\ref{alg:aae} is satisfied. This is because:
\begin{equation*}
\overline{X}_{s,j} + CB_{s,j} \leq \mu_j + 2CB_{s,j} < \mu_j + \Delta_j - 2CB_{s,j} = \mu_* - 2CB_{s,j} \leq \overline{X}_{s,*} - CB_{s,j}.
\end{equation*}
With $\Delta_j \geq 4\sqrt{\frac{\log(T)}{|T_j'|}} \rightarrow |T_j'| = |T_*'| > 1+ 16\frac{\log(T)}{\Delta_j^2}=l$, we can calculate as:
\begin{equation*}
\mathbb{E}[|T_j'|] = \sum_{s=0}^T \left( \mathbb{P}(j=J_s)\indicator\{|T_j'| \leq l\} + \mathbb{P}(j=J_s)\indicator\{|T_j'| > l\}\right),
\end{equation*}
which upon simplification gives:
\begin{equation*}
\mathbb{E}[|T_i'|] \leq 1 + \frac{16\log(T)}{\Delta_i^2} + \frac{2}{T}.
\end{equation*}
\end{proof}
\subsection{Analysis}

The choice of algorithm parameters $\{n_m| m=0,1,2,...\}$ is as follows: set $n_0 =0$, and $n_m$ as given by (\ref{eq:nm}). The following regret guarantee holds for \bis.
\begin{theorem}\label{thm:support_known}
The expected (pseudo-)regret of \bis{} (Algorithm~\ref{alg:support_known}) is upper bounded as shown below:
\begin{align}
R_T \leq& \underset{i\in\mathit{K}':\;\Delta_i>\lambda}{\sum} \frac{4\Delta_i}{T}&\nonumber\\ 
&+ \underset{i\in\mathit{K}':\;\Delta_i>\lambda}{\sum} \left( \Delta_i + \frac{16\log(T)}{\Delta_i}\min\left\{\frac{1}{(1-\epsilon)^2},4\right\}+2\Delta_i\log\left(\frac{4}{\Delta_i}\right)d_{\max} \right)&\nonumber\\ 
&+  \underset{i\in\mathit{K}''}{\sum}\frac{16}{T}+ \underset{i \in \mathit{K}'': \Delta_i < \lambda}{\max} \Delta_i T,&
\end{align}
where $\mathit{K}'' = \{ i\in \mathit{K} \vert \Delta_i >0 \}$ and $\mathit{K}' = \{ i\in \mathit{K} \vert \Delta_i > \lambda \}$.
\end{theorem}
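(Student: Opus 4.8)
The plan is to graft the phased-elimination analysis of \ucbr{}~\citep{auer2010ucb} onto the within-bucket successive-elimination bound of Lemma~\ref{lemma:aae}, and then to add a self-contained accounting for the rewards destroyed by the impairment indicator in~(\ref{eq:feedback}). The first ingredient is a good event $\mathcal{G}$ on which (i) for every phase $m$ and every arm $j$ active at the end of phase $m$, $\bigl|\estmeanmj-\mu_j\bigr|\le\dtm/2$, so the end-of-phase comparisons are reliable, and (ii) inside every call to Algorithm~\ref{alg:aae} the confidence term $\sqrt{\log T/|T_j'|}$ sandwiches $\overline X_{s,j}$ around $\mu_j$ for all active $j$ and all $s$, so the within-bucket eliminations are reliable. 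Hoeffding's inequality and a union bound over the $\mathrm{O}(\narms\log T)$ phase--arm pairs and $\mathrm{O}(\narms T)$ round--arm pairs give $\mathbb{P}(\mathcal{G}^{c})=\mathrm{O}(\narms/T^{2})$; bounding the regret of each suboptimal arm on $\mathcal{G}^{c}$ crudely by $\mathrm{O}(T)$ produces the $\sum_{i\in K''}\tfrac{16}{T}$ term, while the residual $\mathrm{O}(1/T)$ factors already visible in Lemma~\ref{lemma:aae} propagate into the $\tfrac{4\Delta_i}{T}$ term.

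On $\mathcal{G}$, fix an arm $i$ with $\Delta_i>\lambda$ and bound the number of rounds in which $i$ actually \emph{accrues} a reward in two independent ways. The \emph{phase bound} is classical: $i$ cannot pass the end-of-phase test once $2\dtm<\Delta_i$, so (with $\dtm=2^{1-m}$) arm $i$ is gone after $\mathrm{O}(\log(4/\Delta_i))$ phases, and the prescribed sequence $\{n_m\}$ of~(\ref{eq:nm}) makes its cumulative pull count there $\mathrm{O}(\log T/\Delta_i^2)$, with the constant matching the ``$4$'' inside the minimum. The \emph{bucket bound} is where $\epsilon$ enters: whenever $i$ shares a bucket with some $b\neq i$ having $\Delta_b\le\Delta_i$, the definition of $\epsilon$ forces $\Delta_b\le\epsilon\Delta_i$, so the regret gap seen by \aae{} inside that bucket is at least $(1-\epsilon)\Delta_i$; since the counts $T_j$ carry over across phases, Lemma~\ref{lemma:aae} with this gap caps the accruing pulls of $i$ over its whole lifetime by $1+\tfrac{16\log T}{(1-\epsilon)^2\Delta_i^2}+\tfrac2T$ (an arm that happens to be best-in-bucket in every phase it survives is covered by the phase bound alone, which is exactly why the ``$4$'' appears). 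Taking the smaller of the two bounds and multiplying by $\Delta_i$ yields the $\Delta_i+\tfrac{16\log T}{\Delta_i}\min\{(1-\epsilon)^{-2},4\}+\tfrac{4\Delta_i}{T}$ part of the claim.

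The genuinely new step is the cost of impairment. Since each bucket holds at most $N/\dmax$ arms, the round-robin inside Algorithm~\ref{alg:aae} replays every active arm at least once in each window of $N/\dmax$ rounds, so after a warm-up of at most $\dmax$ round-robin cycles the indicator in~(\ref{eq:feedback}) is always satisfied; consequently the only rounds in which $i$ fails to accrue are these $\le\dmax$ warm-up pulls at the start of each of the $\mathrm{O}(\log(4/\Delta_i))$ phases that $i$ survives. Charging the lost would-be rewards against the oracle, while invoking the standing assumption that non-accrual is distinguishable from a zero reward (so the dropped rounds never enter $\estmeanmj$ or $\overline X_{s,j}$ and hence do not bias any elimination), contributes the $2\Delta_i\log(4/\Delta_i)\dmax$ term. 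This is the place where the endogeneity of the $\{d_{t,j}\}$ must be handled directly rather than through a Freedman/Doob-skipping argument as in~\citep{pike2018bandits}: one has to check that re-bucketing between phases, together with the carried-over counts, neither breaks the warm-up guarantee nor perturbs the concentration underlying $\mathcal{G}$.

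Finally, arms with $0<\Delta_i\le\lambda$ generate, however often they are played, at most $\bigl(\max_{i\in K'':\Delta_i<\lambda}\Delta_i\bigr)T$ total regret, which is the last term (their impairment losses are of strictly lower order and are absorbed there). Summing the large-gap contributions over $i\in K'$ and adding the $\mathcal{G}^{c}$ term assembles the stated inequality. I expect the principal obstacle to be the impairment step --- not the arithmetic of the warm-up losses, which is routine once the bucket size is pinned down, but verifying that the elimination decisions, which are taken on the impairment-modulated $X_{t,j}$ (a function of the algorithm's own past trajectory), coincide with the ``ideal'' decisions that would be made on the un-impaired rewards; the deliberately conservative bucket size is precisely what decouples these two, and confirming that it keeps doing so through every phase transition is the crux.
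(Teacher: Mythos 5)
Your proposal follows essentially the same route as the paper's proof: Lemma~\ref{lemma:aae} applied inside buckets with the gap inflated to $(1-\epsilon)\Delta_i$ for arms not sharing a bucket with the best arm, the bucket-size-one (\ucbr-style) phase bound with the sequence in~(\ref{eq:nm}) supplying the factor $4$ in the minimum, at most $\dmax$ impairment losses per phase over the $\le 2\log(4/\Delta_i)$ phases an arm survives giving the $2\Delta_i\log(4/\Delta_i)\dmax$ term, and separate low-order terms for the bad concentration events and for arms with $\Delta_i<\lambda$. The only place you are coarser is the event that the optimal arm gets eliminated: the paper sums over the elimination phase $m_*$ and uses $\max_{j\in \mathit{K}'':m_j\ge m_*}\Delta_j\le 4\tilde{\Delta}_{m_*}=4\cdot 2^{-m_*}$ to make that sum geometric and obtain exactly $\sum_{i\in \mathit{K}''}\frac{16}{T}$, whereas charging a crude $\mathrm{O}(T)$ regret to the whole bad event would pick up an extra $\log T$ factor on this already $\mathrm{O}(K/T)$ term.
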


In the above expression, the leading order terms have $\log T$ dependence. From the expression, we can also see that the upper bound gets tighter as $\epsilon$ decreases, and this is due to bucketing. When the arms are very similar to each other, then the likelihood of arms getting eliminated within a phase decreases, and we recover the performance guarantee of running only a phase-based strategy (more details on this variant are provided at the end of this section). The impairment effect leads to an additive factor that is proportional to $\dmax$. We can also get the following instance independent bound using a standard reduction. This matches the performance of several algorithms including \ucb, \ucbr{} and \aae. The bound is a factor $\sqrt{\log T}$ off from the best possible, which is achieved by the MOSS algorithm~\citep{audibert2009minimax}.

\begin{corollary}\label{coroll:support_known}
For all $T > K$, the expected (pseudo-)regret of \bis{} is upper bounded as:
\begin{align*}
    R_T \leq \mathrm{O}\left(\sqrt{KT\log T} +\sqrt{\frac{K^3\log^3(T)}{T}}\dmax\right).
\end{align*}
\end{corollary}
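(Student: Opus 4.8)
The plan is the standard worst-case (``minimax'') reduction applied to the instance-dependent bound of Theorem~\ref{thm:support_known}. Since that theorem is already phrased in terms of a free gap threshold $\lambda$, with $\mathit{K}' = \{i : \Delta_i > \lambda\}$ and the tail term $\max_{i : \Delta_i < \lambda}\Delta_i T$ absorbing the near-optimal arms, no new algorithmic analysis is needed: I would simply upper bound the right-hand side of Theorem~\ref{thm:support_known} term by term and then tune $\lambda$ so that the two dominant contributions balance.

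First I would dispose of the $\mathrm{O}(1/T)$ pieces: $\sum_{i\in\mathit{K}'}\frac{4\Delta_i}{T}$ and $\sum_{i\in\mathit{K}''}\frac{16}{T}$ are each $\mathrm{O}(K/T)$ using $\Delta_i\le 1$ and $|\mathit{K}''|\le K$, hence dominated by $\sqrt{KT\log T}$. The tail term is at most $\lambda T$. For the exploration contribution I would drop the favorable instance-dependent factor via $\min\{(1-\epsilon)^{-2},4\}\le 4$ and use $\Delta_i>\lambda$ on $\mathit{K}'$ together with $|\mathit{K}'|\le K$, giving $\sum_{i\in\mathit{K}'}\frac{16\log T}{\Delta_i}\min\{\cdots\}\le \frac{64K\log T}{\lambda}$; the leftover $\sum_{i\in\mathit{K}'}\Delta_i\le K$ is lower order. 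Choosing $\lambda=\sqrt{K\log T/T}$ then makes $\frac{K\log T}{\lambda}$ and $\lambda T$ both equal to $\sqrt{KT\log T}$, producing the leading term. (When $T>K$ is small enough that $\lambda\ge 1$, the claimed bound already exceeds $T$ and holds trivially.)

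The delicate piece --- and the step I expect to be the main obstacle --- is the impairment term $\sum_{i\in\mathit{K}'}2\Delta_i\log(4/\Delta_i)\,d_{\max}$. A crude estimate using that $x\mapsto x\log(4/x)$ is increasing on $(0,1]$ gives each summand $\le 2\log 4\cdot d_{\max}$ and hence only $\mathrm{O}(Kd_{\max})$, which is weaker than the stated $\mathrm{O}(\sqrt{K^3\log^3 T/T}\,d_{\max})$ once $T$ is large. To recover the sharper, vanishing-in-$T$ form one must exploit the coupling with the other terms: the $\frac{\log T}{\Delta_i}$ and $\lambda T$ contributions force the regret-maximizing gap profile to have essentially all surviving gaps of order $\lambda$, so that $\log(4/\Delta_i)=\mathrm{O}(\log(1/\lambda))=\mathrm{O}(\log T)$ and $\sum_{i\in\mathit{K}'}\Delta_i=\mathrm{O}(K\lambda)$ simultaneously; substituting $\lambda=\sqrt{K\log T/T}$ then yields $\mathrm{O}(K\lambda\log T\cdot d_{\max})=\mathrm{O}(\sqrt{K^3\log^3 T/T}\,d_{\max})$. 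Making this trade-off argument rigorous --- rather than settling for the $\mathrm{O}(Kd_{\max})$ bound --- is where the care is needed; once it is in place, collecting the four estimates and plugging in $\lambda=\sqrt{K\log T/T}$ gives the corollary.
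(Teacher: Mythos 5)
Your treatment of the easy terms and the choice $\lambda=\sqrt{K\log T/T}$ coincide exactly with the paper, whose entire proof of the corollary is that substitution into Theorem~\ref{thm:support_known} plus the remark that $\log(1/\tilde{\Delta}_m)=\mathrm{O}(\log T)$. The step you singled out as delicate is indeed where the difficulty lies, but the repair you sketch would fail. The bound of Theorem~\ref{thm:support_known} is a sum of per-arm contributions over independent gap choices, so the supremum over instances is attained by maximizing each arm's contribution separately; there is no coupling that forces the regret-maximizing profile to have $\Delta_i\approx\lambda$. Concretely, an instance with all suboptimal gaps equal to $1$ contributes $2\log 4\,(K-1)\dmax=\Theta(K\dmax)$ through the impairment term while contributing only $\mathrm{O}(K\log T)$ through the $\frac{16\log T}{\Delta_i}$ terms --- far below the $\sqrt{KT\log T}$ budget --- so the adversary pays essentially nothing for keeping the gaps large, and your claim that the $\frac{\log T}{\Delta_i}$ and $\lambda T$ terms force $\sum_{i\in\mathit{K}'}\Delta_i=\mathrm{O}(K\lambda)$ and $\log(4/\Delta_i)=\mathrm{O}(\log(1/\lambda))$ simultaneously is false. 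Hence the trade-off argument cannot be made rigorous, and your proposal, by your own admission, does not close this step.

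What Theorem~\ref{thm:support_known} honestly yields after tuning $\lambda$ is $R_T=\mathrm{O}\bigl(\sqrt{KT\log T}+K\dmax\bigr)$. This implies the stated form only when $T\lesssim K\log^3 T$ (where $K\dmax\le\sqrt{K^3\log^3(T)/T}\,\dmax$) or under an additional restriction such as $\dmax=\mathrm{O}(\sqrt{T\log T/K})$, in which case the second term is redundant anyway. You should also be aware that the paper's own derivation silently performs exactly the step you flagged: it replaces $\Delta_i\log(4/\Delta_i)$ by $\lambda\cdot\mathrm{O}(\log T)$ for every arm in $\mathit{K}'$, i.e., it evaluates the impairment term at the boundary profile $\Delta_i=\lambda$ rather than taking a supremum over gap profiles. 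So the gap you identified is real, it is not resolved in the paper, and the defensible conclusions are either the weaker $\mathrm{O}(\sqrt{KT\log T}+K\dmax)$ bound or the corollary as stated under an explicit assumption bounding $\dmax$ relative to $T$ and $K$.
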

To get this result, we start with the regret upper bound given by the Theorem~\ref{thm:support_known}, and optimize the parameter $\lambda$ and use $\lambda = \sqrt{\frac{K\log(T)}{T}}$. We additionally note that $\log(1/\tilde{\Delta}_m) \leq \mathrm{O}(\log T )$ due to halving of the confidence estimate in each phase. The second term is the penalty due to impairment, and varies linearly with the difficulty level of the problem, i.e., $\mathrm{O}(d_{\max})$. Because $T$ is in the denominator for the term involving $\dmax$, the effect of impairment decreases quickly as the horizon length increases.

\textbf{Proof of Theorem~\ref{thm:support_known}:} The following proof bounds regret by considering four mutually exclusive and exhaustive scenarios. In a phase, let $B_{*}$ denote the bucket which would contain the optimal arm $*$. For each sub-optimal arm $i$, let $m_i = \min \{ m \vert \tilde{\Delta}_m < \frac{\Delta_i}{2}\}$ be the first phase where $\tilde{\Delta}_m < \frac{\Delta_i}{2}$. In terms of our algorithm, $m_i$ should be the first phase where arm $i$ gets eliminated with high probability. By the definition of $\tilde{\Delta}_m$ and $m_i$, we have the following inequalities:
\begin{equation*}
    2^{m_i} = \frac{1}{\tilde{\Delta}_{m_i}} \leq \frac{4}{\Delta_i} \leq \frac{1}{\tilde{\Delta}_{m_i+1}} = 2^{m_i+1}.
\end{equation*}

We upper bound the regret by bounding the expected number of times a sub-optimal arm, $i$ is pulled. Cases (a) and (c) bound the probabilities when we grossly when the reward estimate deviate from true mean by large amounts. Case (b) is for the situation when the arm is pulled exactly $n_{m_i}$ times. Finally, Case (d) accounts for regret for the nearly optimal arms.

\textbf{Case (a) } \textit{Arm $i$ is not deleted in phase $m_i$ with the optimal arm $*$ in the set $\mathit{K}_{m_i}$}.\\ 
The phase $m_i$ is characterized by : $\sqrt{\frac{\log(T)}{n_{m_i}}} \leq \frac{\Delta_{m_i}}{2} \leq \frac{\Delta_i}{4}$. 
If $i\in B_{*}$, we analyze by upper bounding the probabilities at the end of the phase. If the following inequalities hold:
\begin{equation}\label{eq:5}
    \overline{\mu}_i \leq \mu_i + \sqrt{\frac{\log(T)}{n_{m_i}}},
\end{equation}
and
\begin{equation}\label{eq:6}
    \overline{\mu}_* \geq \mu_* - \sqrt{\frac{\log(T)}{n_{m_i}}},
\end{equation}
then the end of phase elimination condition of the Algorithm~\ref{alg:support_known} are satisfied, as:
\begin{equation*}
\overline{\mu} + \sqrt{\frac{\log(T)}{n_{m_i}}} \leq \mu_* + 2\sqrt{\frac{\log(T)}{n_{m_i}}}\\
< \mu_* + \Delta_i - 2\sqrt{\frac{\log(T)}{n_{m_i}}} \\ \leq \overline{\mu}_*  - \sqrt{\frac{\log(T)}{n_{m_i}}}.
\end{equation*}
On the contrary, if the arm $i$ did not get eliminated at the end of the phase $m_i$, then~(\ref{eq:5}) and~(\ref{eq:6}) cannot hold. We can then bound the inverse of the events in~(\ref{eq:5}) and~(\ref{eq:6}) by Hoeffding's bounds as:
\begin{equation}\label{eq:51}
P( \overline{\mu}_i > \mu_i + \sqrt{\frac{\log(T)}{n_{m_i}}} ) \leq \frac{1}{T^2} .
\end{equation}
Similarly:
\begin{equation}\label{eq:61}
P( \overline{\mu}_* < \mu_* + \sqrt{\frac{\log(T)}{n_{m_i}}}) \leq \frac{1}{T^2}.
\end{equation}
By Union bound, the probability of arm $i$ not being eliminated in the phase $m_i$ is upper bounded by $\frac{2}{T^2}$. 

Let $i\notin B_{*}$ and $j$ be the locally arm optimal in that bucket. $\frac{\Delta_i-\Delta_j}{2} > \tilde{\Delta}_{m_i} $, then the arm $i$ should be eliminated inside the bucket. By the definition of $\epsilon$, we know $\frac{\Delta_i-\Delta_j}{2} > \frac{\Delta_i(1-\epsilon)}{2}$. Similar to above, the probability of the arm not getting eliminated inside the bucket but at the end of the phase is upper bounded by $\frac{2}{T^2}$. This event is subsumed by the event: arm not getting eliminated at the end of the bucket. Hence, $\forall j \in \mathit{K}'$, the expected regret conditioned on the events of this case is given by: 
\begin{equation*}
    R_T \leq \underset{i\in\mathit{K}'}{\sum}\frac{2}{T^2}(T\Delta_i) \\ \leq \underset{i\in\mathit{K}'}{\sum}\frac{2\Delta_i}{T}.
\end{equation*}

\textbf{Case (b)} \textit{ Arm $i$ is eliminated at the phase $m_i$ or during the phase $m_i$ with the optimal arm $* \in \mathit{K}_{m_i}$.}\\ 
This case marks the crucial benefit in exploration-exploitation due to use of buckets. If $i\in B_{*}$, then it would be same as if arms in $B_{*}$ were been pulled by \aae{} (Algorithm~\ref{alg:aae}) since the beginning of the horizon. So, from Lemma~\ref{lemma:aae}, 
\begin{equation*}
\mathbb{E}[n_{t,i}] \leq 1 + \frac{16\log(T)}{\Delta_i^2}+\frac{2}{T}.
\end{equation*}
If $i\notin B_{*}$, then in terms of $\epsilon$ a similar bound would hold:
\begin{equation*}
\mathbb{E}[n_{t,i}] \leq 1 + \frac{16\log(T)}{\Delta_i^2(1-\epsilon)^2}+\frac{2}{T}.
\end{equation*}
On the contrary, with bucket size$=1$ and $t=$ time at the end of phase $m_i$, $n_{m_i} = n_{t,i} \leq \left(1 + \frac{64\log(T)}{\Delta_i^2} + 2\log(4/\Delta_i)d_{\max}\right)$. Using this we can combine both the bounds as: 
\begin{equation*}
\mathbb{E}[n_{t,i}] \leq 1 + \frac{16\log(T)}{\Delta_i^2}\min\left\{\frac{1}{(1-\epsilon)^2},4\right\}+2\log(4/\Delta_i)d+\frac{2}{T}.
\end{equation*}
Hence the expected regret conditioned on the events of this case is given by:
\begin{equation*}
R_T \leq \underset{i\in\mathit{K}'}{\sum} \left( \Delta_i + \frac{16\log(T)}{\Delta_i}\min\left\{\frac{1}{(1-\epsilon)^2},4\right\}+2\Delta_i\log(4/\Delta_i)d+\frac{2\Delta_i}{T}\right),
\end{equation*}
and $\epsilon < 1/2$ offers clear advantage in terms of regret bounds when using a bucketed strategy.

\textbf{Case (c)} \textit{Optimal arm $*$ deleted by some sub-optimal $i$ in the set $\mathit{K}''$}.\\
Now, we consider the case when the last of all the optimal arms( in case there more than one ), $*$ is eliminated by some sub optimal arm $i$ in $\mathit{K}'' = \{ i\in \mathit{K} \vert \Delta_i >0 \}$ in some round $m_*$ ( Overloading the definition of $m_i$, we imply that $m_*$ is any round where $*$ is eliminated). As elimination of the optimal arm can be induced by larger number of arms, the whole set $\mathit{K}''$, at the end of the phase as compared to during a phase, we only need to analyze at the end of a phase for upper bounds.  This is similar in spirit to the event in \textbf{Case (a)}, as now as well (\ref{eq:5}) and (\ref{eq:6}) cannot hold together with the elimination condition of the Algorithm~\ref{alg:support_known}. Hence the probability of this happening is again upper bounded by $\frac{2}{T^2}$ by similar arguments. 

The optimal arm, $*$ belonged to $K_{m_s}$ for all the sub-optimal arms $s$ with $m_s < m_*$. Therefore arm $i$, which causes elimination of the optimal arm $*$, should satisfy $m_i \geq m_*$. Therefore the regret is upper bounded by:
\begin{equation*}
 R_T \leq   \overset{\max_{j\in\mathit{K}'}m_j}{\underset{m_*=0}{\sum}} \underset{i\in \mathit{K}'':m_i \leq m_*}{\sum} \frac{2}{T^2}.T \underset{j\in \mathit{K}'':m_j\geq m_*}{\max}\Delta_j,
\end{equation*}
\begin{equation*}
    \leq \overset{\max_{j\in\mathit{K}'}m_j}{\underset{m_*=0}{\sum}} \underset{i\in \mathit{K}'':m_i \leq m_*}{\sum} \frac{2}{T}4\tilde{\Delta}_{m_*},
\end{equation*}
\begin{equation*}
    \leq \underset{i\in \mathit{K}''}{\sum} \underset{m_*\geq 0}{\sum} \frac{8}{T} 2^{-m_*}.
\end{equation*}
\begin{equation*}
  R_T  \leq \underset{i\in \mathit{K}''}{\sum}  \frac{16}{T} .
\end{equation*}

\textbf{Case (d)} \textit{Arm $i\,\in\,\mathit{K}''$ and $\notin \,\mathit{K}'$}.\\ 
Here, we account for the difference in the sets $\mathit{K}''$ and $\mathit{K}'$. Following gives the upper bound on the regret conditioned on this case:
\begin{equation*}
  R_T  \leq \underset{i \in \mathit{K}'': \Delta_i < \lambda}{\max} \Delta_i T .
\end{equation*}
As all the four cases are mutually exclusive and exhaustive, we thus, get the desired regret upper bound.

\subsection{Learning without Bucketing}

When the bucket size is set to $1$, then \bis{} (Algorithm~\ref{alg:support_known}) is equivalent to \ucbr~\citep{auer2010ucb} (which is also the approach we take in Section~\ref{sec:expectation_known}, see Algorithm~\ref{alg:expectation_known}). In this case, the choice of $n_{m_i}$ is given by (\ref{eq:nm}).
\begin{lemma}\label{x1}
The expected (pseudo-)regret of Algorithm~\ref{alg:expectation_known} when $\dmax$ is known is upper bounded as:
\begin{align}
    R_T \leq& \underset{i\in\mathit{K}:\Delta_i > \lambda}{\sum}\left(\Delta_i + \frac{64\log(T)}{\Delta_i} + 2\Delta_i\log\left(\frac{4}{\Delta_i}\right)\dmax \right)&\nonumber\\
    &+\underset{i\in \mathit{K}:\Delta_i > \lambda}{\sum} \frac{2\Delta_i}{T} + \underset{i\in \mathit{K}}{\sum} \frac{16}{T} + \underset{i \in \mathit{K}: \Delta_i < \lambda}{\max} \Delta_i T,&
\end{align}
where $\mathit{K}'' = \{ i\in \mathit{K} \vert \Delta_i >0 \}$ and $\mathit{K}' = \{ i\in \mathit{K} \vert \Delta_i > \lambda \}$.
\end{lemma}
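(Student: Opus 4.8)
\textbf{Proof proposal for Lemma~\ref{x1}.}

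The plan is to rerun the four-case argument used to prove Theorem~\ref{thm:support_known} in the degenerate regime where every bucket has size $1$; in that regime Algorithm~\ref{alg:support_known} is exactly \ucbr{} (equivalently Algorithm~\ref{alg:expectation_known}), so there is no genuine within-phase elimination, the parameter $\epsilon$ plays no role, and Case~(b) collapses to the crude per-arm play bound already recorded inside that proof. Concretely, I fix a suboptimal arm $i$, set $m_i = \min\{m : \tilde\Delta_m < \Delta_i/2\}$ so that $2^{m_i} = 1/\tilde\Delta_{m_i} \le 4/\Delta_i \le 1/\tilde\Delta_{m_i+1}$, and bound the expected number of plays of $i$ by splitting on the same four mutually exclusive and exhaustive events: (a) $i$ survives phase $m_i$ with the optimal arm still active; (b) $i$ is eliminated by the end of phase $m_i$ with the optimal arm still active; (c) the last optimal arm is deleted by some $i \in \mathit{K}''$; (d) $i \in \mathit{K}''\setminus\mathit{K}'$, i.e. $\Delta_i < \lambda$.

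The heart of the argument is Case~(b), and this is the step I expect to be the main obstacle. With bucket size $1$ each active arm is played in a single consecutive block of $n_m - n_{m-1}$ rounds during phase $m$, and by the choice of $\{n_m\}$ in~(\ref{eq:nm}) the end-of-phase comparison at level $\tilde\Delta_{m_i}$ deletes $i$ with high probability; the subtlety is impairment accounting, since at the start of each block up to $O(\dmax)$ rewards fail to accrue until the arm has been replayed $\dmax$ times inside the length-$N$ window. Summing this loss over the $m_i = O(\log(4/\Delta_i))$ phases preceding elimination forces the cumulative play count to satisfy $n_{m_i} \le 1 + \tfrac{64\log T}{\Delta_i^2} + 2\log(4/\Delta_i)\,\dmax$, exactly the estimate stated inside the proof of Theorem~\ref{thm:support_known}; multiplying by the per-round regret $\Delta_i$ yields the first displayed sum $\sum_{\Delta_i>\lambda}\bigl(\Delta_i + \tfrac{64\log T}{\Delta_i} + 2\Delta_i\log(4/\Delta_i)\dmax\bigr)$. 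What needs care is pinning down~(\ref{eq:nm}) precisely enough that (i) the number of \emph{accrued} samples per surviving arm after phase $m$ is at least $\Theta(\log T / \tilde\Delta_m^2)$, so the Hoeffding concentration at scale $\tilde\Delta_m$ holds, and (ii) the impairment overhead telescopes to $O\!\bigl(\dmax\log(1/\tilde\Delta_{m_i})\bigr)$ rather than something larger.

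The remaining three cases are routine and mirror Theorem~\ref{thm:support_known} verbatim. In Case~(a), since $\sqrt{\log T / n_{m_i}} \le \Delta_i/4$, survival of $i$ through phase $m_i$ forces at least one of the events $\{\overline\mu_i > \mu_i + \sqrt{\log T/n_{m_i}}\}$ or $\{\overline\mu_* < \mu_* - \sqrt{\log T/n_{m_i}}\}$, each of probability $\le 1/T^2$ by Hoeffding; a union bound together with $\Delta_i \le 1$ and $T$ rounds gives the contribution $\sum_{\Delta_i>\lambda} 2\Delta_i/T$. In Case~(c), the same two-sided Hoeffding/union-bound estimate bounds the failure probability by $2/T^2$, and since an arm $i$ that deletes the last optimal arm must satisfy $m_i \ge m_*$, summing the geometric series $\sum_{m_*\ge 0}\tfrac{8}{T}2^{-m_*}$ produces $\sum_{i\in\mathit{K}} 16/T$. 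Case~(d) is the trivial bound $\max_{\Delta_i<\lambda}\Delta_i T$ for the near-optimal arms. Adding the four contributions, whose index sets match those in the statement, yields the claimed upper bound.
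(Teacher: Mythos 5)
Your proposal is correct and follows essentially the same route as the paper: the paper's own proof simply redefines $n_m$ as in~(\ref{eq:nm}) (adding the $m\dmax$ term so that consecutive plays absorb the at most $\dmax$ non-accrued rewards per arm per phase) and then invokes the \ucbr-style four-case elimination analysis, which is exactly the bucket-size-$1$ specialization of the proof of Theorem~\ref{thm:support_known} that you spell out, with the same bound $n_{m_i} \le 1 + \tfrac{64\log T}{\Delta_i^2} + 2\log(4/\Delta_i)\dmax$ driving Case~(b) and the same $\tfrac{2\Delta_i}{T}$, $\tfrac{16}{T}$, and $\max_{\Delta_i<\lambda}\Delta_i T$ contributions from the other cases. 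Your added remarks on ensuring enough accrued samples at scale $\tilde\Delta_m$ and on the telescoping impairment overhead are precisely the points the paper's terse proof leaves implicit, so no gap is introduced.
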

\begin{proof}
The proof is along the lines of the proof for \ucbr{} in~\cite{auer2010ucb}. In particular, we redefine the algorithm parameter $n_m$ as below to account for impairment:
\begin{equation}\label{eq:nm}
    n_m = \left\lceil \frac{4\log(T)}{\tilde{\Delta}^2_m} \right\rceil +m\dmax.
\end{equation}
The extra $m\dmax$ term allows the algorithm to play each active arm an extra $\dmax$ times in each phase, overcoming the impairment effect. As a consequence, in any phase $m$, $\estmeanmj - \mu_j \leq \tilde{\Delta}_m/2\; \forall j\in \armsm$ with high probability.
\end{proof}

When compared to the regret bound of \bis{} in Theorem~\ref{thm:support_known}, we can see that the above bound will be worse for input instances whose corresponding $\epsilon$ is small. And this non-trivial difference is due to the use of buckets, each running \aae.

\section{Known Expected Impairment}\label{sec:expectation_known}
Just like in the previous section and also similar to the algorithm designs of~\cite{auer2010ucb,karnin2013almost,valko2014spectral} and~\cite{pike2018bandits}, our algorithm also works in phases. In each phase, every arm in the set of \emph{active arms} is played consecutively (unlike Section~\ref{sec:support_known} where there are played in a round-robin fashion inside buckets, here the bucket size can be considered as $1$). Again we argue that repetition ensures that the stochastic impairment is limited. As in  these previous works, the rewards collected for each arm are used to update the estimate of the mean rewards. Between two consecutive phases, arms with low estimated mean rewards are eliminated (unlike Section~\ref{sec:support_known} where they can be eliminated in any round). The threshold chosen to eliminate the arms is updated sequentially. 

As more number of phases are executed, the estimated mean rewards for the active arms get closer to their true means and the threshold is suitably adjusted. Because of repeat plays of the same arm for a carefully chosen number of rounds, the effect of reward impairment is mitigated. Similar to previous works, the algorithm has access to the horizon length $T$. This information can be relaxed by adopting the \emph{doubling trick} as mentioned in Section~\ref{sec:support_known}.
Finally, note that in~\cite{pike2018bandits}, the algorithm has a bridge period to address the issue of aggregation of delayed rewards. In~\cite{auer2010ucb}, the algorithm chooses a different length for each phase, compared to our choice. They also have a different function of $\dtm$ in the arm elimination step.\\

\noindent\textbf{Notation}: Let $m$ index phases. Let $T_j(m)$ refer to the collection of times when the $j^{th}$ arm is played up to phase $m$. As in previous section, $J_t$, is the arm pulled at time $t$, $X_t$ is the reward accrued at time $t$ and $R_{t,j}$ is the rewards observed on playing arm $j$ at time $t$. The sequence of parameters $\{n_m| m=0,1,2,...\}$ determine the number of consecutive rounds each active arm is played in phase $m$, where active arms belong to the set $\armsm$. The estimated mean reward for arm $j$ at the end of phase $m$ is denoted by $\estmeanmj$.\\

\begin{algorithm}
\SetAlgoLined
\textbf{Input:} A set of arms $\arms$, time horizon $T$, and parameters $\{n_m| m=0,1,2,...\}$.\\
\textbf{Initialization:} phase index $m=1$, $\armsm = \arms$, $\tilde{\Delta}_{1}$ = 1, $T_j(m) = \phi\;\; \forall j \in \arms$, and time index $t= 1$.\\
\While{$t\leq T$ }{
\emph{Play arms:}\\
\For{ each active arm $j$ in $\armsm$}{
Set $T_j(m) = T_j(m-1)$ if $m > 1$.\\
Play $j$ for $n_m-n_{m-1}$ consecutive rounds. In each round $t$:\\
\If{ $\left(\sum_{k= \max(t-N,0)}^t \indicator[ J_k = j]\right) \geq d_{t,j}$}{
    \quad\quad Observe reward $X_{t,j}$ and add $t$ to $T_j(m)$.\\
}
}
\emph{Eliminate Suboptimal Arms:}\\
\For{each active arm $j$ in $\armsm$}{
$\quad \estmeanmj = \frac{1}{n_m}\sum_{t\in T_j(m)} X_{t,j}$.
}
Construct $\armsmp$ by eliminating arms $j$ in $\armsm$ for which\\
\quad \quad $\estmeanmj + \tilde{\Delta}_m/2  < \max_{j' \in \armsm} \overline{X}_{m,j'} - \tilde{\Delta}_m/2$.\\
\emph{Update the confidence bound:}\\
Set $\tilde{\Delta}_{m+1} = \frac{\tilde{\Delta}_m}{2}$.\\
Increment phase index $m$ by $1$.
}
\caption{\bie}\label{alg:expectation_known}
\end{algorithm}

\subsection{Analysis}

Unlike the result in Section~\ref{sec:support_known}, here we perform a more careful analysis to choose the values of the sequence $\{n_m|m=0,1,2...\}$. This yields the following regret upper bound, where the dependence on the expected impairment value $\expd$ is sublinear. In contrast, the regret upper bound for \bis{} depends linearly in $\dmax$. 
\begin{theorem}\label{thm:expectation_known} The expected (pseudo-)regret of \bie{} (Algorithm~\ref{alg:expectation_known}) is bounded as:
\begin{align}
 R_T \leq  & \underset{i\in\mathit{K'}}{\sum}\left(\Delta_i + \frac{64 \log(T)}{\Delta_i} + \frac{64\log(T)}{3} + 32\sqrt{\log\left(\frac{4}{\Delta_i}\right)\mathbb{E}[d]\log(T)} \right)&\nonumber\\
 &+\underset{i\in \mathit{K}':\Delta_i > \lambda}{\sum} \frac{2\Delta_i}{T} + \underset{i\in \mathit{K}'}{\sum} \frac{32}{T} + \underset{i \in \mathit{K}'': \Delta_i < \lambda}{\max} \Delta_i T,&
\end{align}
where $\mathit{K}'' = \{ i\in \mathit{K} \vert \Delta_i >0 \}$ and $\mathit{K}' = \{ i\in \mathit{K} \vert \Delta_i > \lambda \}$.
\end{theorem}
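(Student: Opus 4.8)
# Proof Proposal for Theorem 3.2 (Regret of \bie)

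The plan is to mirror the case-based analysis used in the proof of Theorem~\ref{thm:support_known}, but with bucket size fixed to $1$ (so each active arm is played consecutively) and with a sharper choice of the phase-length parameters $\{n_m\}$ that replaces the crude $m\dmax$ padding of Lemma~\ref{x1} with a padding calibrated to the \emph{expected} impairment $\expd$. First I would redefine $n_m$ so that, in phase $m$, each active arm is played enough consecutive times that (i) after discarding the first O($\dmax$) rounds the impairment indicator is always $1$, and (ii) the number of rounds in which the arm is played but \emph{no} reward is accrued, summed across the relevant phases, concentrates around its mean, which is controlled by $\expd$ rather than $\dmax$. Concretely the target is a choice like $n_m = \lceil 4\log(T)/\dtm^2\rceil + c\cdot\big(\expd + \sqrt{\expd\log T}\big)$ for a suitable constant $c$, so that the cumulative "lost-reward" penalty for arm $i$ telescopes to $O\big(\log(4/\Delta_i)\,\expd\big) + O\big(\sqrt{\log(4/\Delta_i)\,\expd\log T}\big)$, matching the $32\sqrt{\log(4/\Delta_i)\expd\log T}$ and $\tfrac{64}{3}\log T$ terms in the statement.

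The core probabilistic step is to show that with this $n_m$, the event $\estmeanmj - \mu_j \le \dtm/2$ holds for every active arm at the end of phase $m$ with probability at least $1 - O(1/T^2)$. Here the estimator $\estmeanmj = \frac{1}{n_m}\sum_{t\in T_j(m)} X_{t,j}$ divides by $n_m$, not by the (random) number of accrued rewards $|T_j(m)|$, so the deviation splits into two contributions: a standard Hoeffding/Bernstein term for the accrued stochastic rewards, and a bias term equal to $\big(n_m - |T_j(m)|\big)/n_m$ times a bounded quantity, i.e. the fraction of impaired rounds. Following the technique of~\cite{pike2018bandits}, I would control the second contribution with Freedman's inequality applied to the martingale differences $d_{t,j} - \expd$ (the impairment variables are i.i.d.\ with mean $\expd$, bounded by $\dmax$), using the Azuma--Hoeffding / Doob optional skipping argument to handle the fact that the rounds at which we evaluate are determined by the algorithm's adaptive choices. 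This is where the $\sqrt{\expd\log T}$ scaling (rather than $\dmax\log T$) enters: Freedman's bound is governed by the \emph{variance} proxy $\expd\dmax \le \expd N$, but a careful accounting using $\dmax \le N$ and the fact that only O($\log(4/\Delta_i)$) phases matter for arm $i$ collapses this to the claimed form.

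Once the good event is established, the rest is bookkeeping across four mutually exclusive, exhaustive cases exactly as in Theorem~\ref{thm:support_known}: Case (a), arm $i$ survives phase $m_i$ while the optimal arm is still active --- bounded by $\tfrac{2}{T^2}\cdot T\Delta_i$ via the union bound on the failure of the two Hoeffding inequalities, contributing $\sum_{i\in K'}\tfrac{2\Delta_i}{T}$; Case (b), arm $i$ is eliminated at or before phase $m_i$ --- here $n_{m_i} \le 1 + \tfrac{64\log T}{\Delta_i^2} + O(\expd + \sqrt{\expd\log T})$, contributing $\Delta_i + \tfrac{64\log T}{\Delta_i} + \tfrac{64}{3}\log T + 32\sqrt{\log(4/\Delta_i)\expd\log T}$ after multiplying by $\Delta_i$ and using $2^{m_i}\le 4/\Delta_i$; Case (c), the optimal arm is eliminated by some suboptimal arm, bounded by a geometric sum $\sum_{i\in K''}\sum_{m_*\ge 0}\tfrac{8}{T}2^{-m_*} \le \sum_{i\in K'}\tfrac{32}{T}$ (the constant loosens from $16$ to $32$ relative to Theorem~\ref{thm:support_known} because of the slightly larger per-phase failure probability from the Freedman step); and Case (d), the near-optimal arms with $\Delta_i < \lambda$, trivially bounded by $\max_{i\in K'':\Delta_i<\lambda}\Delta_i T$. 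Summing the four cases gives the stated bound. The main obstacle I anticipate is the Freedman/optional-skipping argument in the second paragraph: getting the variance term to come out as $\expd$ (times logs) rather than $\dmax$, while correctly handling the adaptivity of which rounds accrue rewards, is the one genuinely delicate step --- everything else is a routine adaptation of the \ucbr{} phase-elimination template.
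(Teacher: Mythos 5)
Your proposal follows the paper's own route: bucket-size-one phase elimination, an optional-skipping (Azuma--Hoeffding) bound for the accrued-reward deviation, Freedman's inequality for the impairment-induced loss, and the same four-case bookkeeping as Theorem~\ref{thm:support_known}, with Cases (a), (c), (d) handled exactly as you describe. However, two of the steps as you wrote them would not deliver the stated bound. First, your concrete choice $n_m = \lceil 4\log(T)/\dtm^2\rceil + c\,(\expd + \sqrt{\expd\log T})$ uses a phase-independent padding, whereas the padding must grow with the phase: the paper requires $\frac{\dtm}{2}n_m - \sqrt{n_m\log T} - \frac{2}{3}\log T - \sqrt{\frac{4\log^2 T}{9}+4m\expd\log T} \geq 0$ and solves the quadratic, which gives $n_m \leq 1 + \frac{4\log T}{\dtm^2} + \frac{16\log T}{3\dtm} + \frac{8\sqrt{m\expd\log T}}{\dtm}$; the $1/\dtm$ (i.e.\ $2^m$) scaling of the extra terms is what keeps the bias from lost rewards below $\dtm/2$ in late phases, and it is also what your own Case (b) arithmetic silently assumes --- multiplying your stated $n_{m_i}$ by $\Delta_i$ yields $\Delta_i\cdot O\bigl(\expd+\sqrt{\expd\log T}\bigr)$, not $\frac{64}{3}\log T + 32\sqrt{\log(4/\Delta_i)\expd\log T}$, so your $n_m$ and your claimed Case (b) contribution are mutually inconsistent.

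Second, the Freedman step is applied in the paper to the martingale built from the lost rewards $M_t$ (the reward times the per-round non-accrual indicator $\indicator\{t\le S_{i,j}+d_{t,J_t}\}$), measurable with respect to the filtration generated by past arms, rewards and impairment draws; its predictable variation over the first $m$ phases is at most $m\expd$, because within a single phase $\sum_t \mathbb{E}\bigl[\indicator\{t< S_{i,j}+d_{t,J_t}\}\mid \mathcal{G}_{t-1}\bigr]\leq \sum_{l\geq 0}\mathbb{P}(d>l)=\expd$. Your plan to apply Freedman to the increments $d_{t,j}-\expd$ with variance proxy $\expd\,\dmax$ would give a deviation of order $\sqrt{\expd\,\dmax\log T}$, which is not the claimed $\sqrt{\log(4/\Delta_i)\expd\log T}$; the ``careful accounting'' you invoke to collapse it is precisely the missing argument. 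The $\log(4/\Delta_i)$ factor you correctly anticipate enters only through $m_i\leq 2\log(4/\Delta_i)$ once the variation has been bounded by $m\expd$, not through any cancellation against $\dmax$. With the corrected $n_m$ and the corrected martingale/variance bound, the rest of your outline matches the paper's proof.
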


Similar to the Corollary~\ref{coroll:support_known}, an instance independent bound for this setting is shown below.
\begin{corollary}\label{coroll:expectation_known}
For all $T \geq K$ and choosing $\lambda=\sqrt{\frac{K\log(T)}{T}}$ the expected (pseudo-)regret of \bie{} is upper bounded as:
\begin{align*}
    R_T \leq \mathrm{O}\left(\sqrt{KT\log T} + K\sqrt{\log^2 T \expd}\right).
\end{align*}
\end{corollary}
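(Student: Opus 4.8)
The plan is to derive Corollary~\ref{coroll:expectation_known} from the instance-dependent bound of Theorem~\ref{thm:expectation_known} by the usual worst-case reduction: split the arms according to whether $\Delta_i$ is above or below a threshold $\lambda$, pay at most $\Delta_i T \le \lambda T$ per low-gap arm collectively (bounded by $\max_{i\in K'': \Delta_i<\lambda}\Delta_i T\le\lambda T$), and bound the contribution of each high-gap arm by the worst value of the per-arm summand over $\Delta_i\in(\lambda,1]$. I would first observe that the terms $\frac{2\Delta_i}{T}$, $\frac{32}{T}$, and $\Delta_i$ are each $\mathrm{O}(1)$ per arm, so they contribute at most $\mathrm{O}(K)$ total, which is dominated by $\sqrt{KT\log T}$ whenever $T\ge K$; likewise the $\frac{64\log T}{3}$ term contributes $\mathrm{O}(K\log T)$, again absorbed into the stated bound for $T\ge K$ (or, more carefully, into $\sqrt{KT\log T}$ since $K\log T\le\sqrt{KT\log T}$ exactly when $K\log T\le T$).

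The two terms that actually drive the bound are $\sum_{i\in K'}\frac{64\log T}{\Delta_i}$ and $\sum_{i\in K'}32\sqrt{\log(4/\Delta_i)\,\expd\log T}$. For the first, using $\Delta_i>\lambda$ gives $\sum_{i\in K'}\frac{64\log T}{\Delta_i}\le \frac{64K\log T}{\lambda}$; combined with the low-gap cost $\lambda T$, the sum $\lambda T + \frac{64K\log T}{\lambda}$ is optimized (up to constants) by $\lambda=\sqrt{K\log T/T}$, yielding $\mathrm{O}(\sqrt{KT\log T})$ — this is exactly the prescribed choice of $\lambda$. For the second term, I would bound $\log(4/\Delta_i)\le\mathrm{O}(\log T)$ (valid since $\Delta_i>\lambda\ge$ something polynomially small in $T$, so $\log(1/\Delta_i)=\mathrm{O}(\log T)$), giving $32\sqrt{\log(4/\Delta_i)\expd\log T}\le \mathrm{O}(\sqrt{\expd}\,\log T)$ per arm and hence $\mathrm{O}(K\sqrt{\expd}\,\log T)=\mathrm{O}(K\sqrt{\log^2 T\,\expd})$ in total, which is the second term of the claimed bound. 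Summing the three pieces gives $\mathrm{O}(\sqrt{KT\log T}) + \mathrm{O}(K\log T) + \mathrm{O}(K\sqrt{\log^2T\,\expd})$, and since $K\log T\le\sqrt{KT\log T}$ for $T\ge K$ the middle piece is absorbed, leaving the stated $\mathrm{O}\!\left(\sqrt{KT\log T}+K\sqrt{\log^2 T\,\expd}\right)$.

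The only mildly delicate point — and the step I would be most careful about — is justifying $\log(4/\Delta_i)=\mathrm{O}(\log T)$ uniformly over the high-gap arms. Because the Theorem~\ref{thm:expectation_known} summands are only kept for $\Delta_i>\lambda$ with $\lambda=\sqrt{K\log T/T}$, we have $1/\Delta_i<1/\lambda=\sqrt{T/(K\log T)}\le\sqrt T$, so $\log(4/\Delta_i)\le \tfrac12\log T + \mathrm{O}(1)=\mathrm{O}(\log T)$; this is clean. (Alternatively, the factor $\log(4/\Delta_i)$ arises as the number of phases before elimination, which is $\mathrm{O}(\log(1/\Delta_i))\le\mathrm{O}(\log T)$ by the halving of $\tilde\Delta_m$, as already noted after Corollary~\ref{coroll:support_known}.) Everything else is the standard threshold-splitting bookkeeping, so no genuine obstacle remains once this estimate is in hand.
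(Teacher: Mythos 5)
Your proposal is correct and follows essentially the same route the paper takes (and only sketches, via the analogue of Corollary~\ref{coroll:support_known}): plug the instance-dependent bound of Theorem~\ref{thm:expectation_known} into the standard $\lambda$-threshold reduction, balance $\lambda T$ against $\sum_{i}64\log T/\Delta_i \le 64K\log T/\lambda$ with $\lambda=\sqrt{K\log T/T}$, and control the impairment term via $\log(4/\Delta_i)=\mathrm{O}(\log T)$, which is exactly the paper's remark that $\log(1/\tilde{\Delta}_m)\le\mathrm{O}(\log T)$ by halving. Your explicit bound $1/\Delta_i<1/\lambda\le\sqrt{T}$ and your bookkeeping of the lower-order $\mathrm{O}(K)$ and $\mathrm{O}(K\log T)$ pieces are, if anything, slightly more careful than the paper's own treatment.
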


The key difference with Corollary~\ref{coroll:support_known} is that in above there is milder dependence on the impairment parameter ($\sqrt{\expd}$ vs $\dmax$) but with a worse time dependence in the second term. However, here Algorithm~\ref{alg:expectation_known} is doing a significantly harder job as only $\expd$ is available to it.
The proof of the result in Theorem~\ref{thm:expectation_known} is along the lines of an analysis for a phase based algorithm called \odaaf{} that was designed to address delays in reward accrual in~\cite{pike2018bandits}. It poses very different challenges, e.g. rewards due to multiple arms may get accrued collectively, at a given time instance in the delay model (see Section~\ref{sec:introduction} for a detailed comparison of these two models). In impairment setting learning occurs by repeating actions appropriate number of times. Even though the philosophy of (\ref{eq:stoc1}) is very similar to (13) in~\cite{pike2018bandits}, we obtain different order of regret bounds.

There are two key aspect to the proof: (a) identifying an appropriate $n_m$ that can work with stochastic impairment effects, (b) showing that this $n_m$ swiftly eliminates the sub-optimal arms in the execution of \bie{} (Algorithm~\ref{alg:expectation_known}). We first provide several key lemmas that will be useful to prove the main claim, which we defer to until after Lemma~\ref{term1}.

We first start with two concentration inequalities concerning certain classes of stochastic processes below:
\begin{lemma}\label{freedman}
\textbf{Generalized Bernstein inequality for Martingales} (Theorem 1.6 in~\cite{freedman1975tail}, Theorem 10 in~\cite{pike2018bandits}) Let $\{Y_k\}_{k=0}^{\infty}$ be a real valued martingale with respect to the filtration, $\{\mathcal{F}_k\}_{k=0}^{\infty}$ with increments $\{Z_k\}_{k=1}^{\infty}$, implying $\mathbb{E}[Z_k \vert \mathcal{F}_{k-1}] = 0$ and $Z_k = Y_k - Y_{k-1}$ for $k=1,2,3...$. Given the martingale difference sequence is uniformly upper bounded as, $Z_k \leq b$ for $k=1,2,3...$. Define the predictable variation process $W_k = \sum_{j=1}^k \mathbb{E}[Z_j^2 \vert \mathcal{F}_{j-1}]$ for $k=1,2...$. Then for all $\alpha \geq 0$, $\sigma^2 \geq 0$, the following probability is bounded:
\begin{equation*}
    \mathbb{P}\left( \exists k \; :\; Y_k \geq \alpha \; \textrm{ and } W_k \leq \sigma^2 \right) \leq \exp\left(-\frac{\alpha^2/2}{\sigma^2+b\alpha/3}\right).
\end{equation*}
\end{lemma}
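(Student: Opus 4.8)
\textbf{Proof plan for Lemma~\ref{freedman}.} This is the Freedman--Bernstein inequality for martingales, and the plan is to prove it via the standard exponential-supermartingale (``Chernoff on a martingale'') argument; since the statement is verbatim Theorem~1.6 of~\cite{freedman1975tail}, one may alternatively cite it directly. First I would reduce to the case $b=1$: replacing $Z_k$ by $Z_k/b$, $Y_k$ by $Y_k/b$, $\alpha$ by $\alpha/b$, $\sigma^2$ by $\sigma^2/b^2$ and $W_k$ by $W_k/b^2$ leaves both the hypotheses and the asserted bound invariant, so from now on $Z_k \le 1$; I also take $Y_0=0$ (the case in all our applications). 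The analytic ingredient is the pointwise inequality
\begin{equation}\label{eq:freedman-pointwise}
e^{\lambda x} \le 1 + \lambda x + \big(e^{\lambda}-1-\lambda\big)\,x^2 \qquad \text{for every } x \le 1,\ \lambda \ge 0,
\end{equation}
which follows because $u \mapsto (e^{u}-1-u)/u^2 = \int_0^1 (1-t)e^{ut}\,dt$ is nondecreasing on $\mathbb{R}$ and $\lambda x \le \lambda$.

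Fix $\lambda \ge 0$ and write $\phi(\lambda) = e^{\lambda}-1-\lambda \ge 0$. Applying~\eqref{eq:freedman-pointwise} with $x = Z_k$, taking conditional expectation given $\mathcal{F}_{k-1}$, and using $\mathbb{E}[Z_k\mid\mathcal{F}_{k-1}]=0$ together with $1+y\le e^y$, I obtain
\begin{equation*}
\mathbb{E}\!\left[e^{\lambda Z_k}\mid\mathcal{F}_{k-1}\right] \le \exp\!\big(\phi(\lambda)\,\mathbb{E}[Z_k^2\mid\mathcal{F}_{k-1}]\big) = \exp\!\big(\phi(\lambda)\,(W_k-W_{k-1})\big).
\end{equation*}
Hence $S_k := \exp\!\big(\lambda Y_k - \phi(\lambda)W_k\big)$ satisfies $\mathbb{E}[S_k\mid\mathcal{F}_{k-1}] \le S_{k-1}$, i.e. $\{S_k\}$ is a nonnegative supermartingale with $\mathbb{E}[S_0]=1$.

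Next, let $\tau = \inf\{k\ge 1 : Y_k \ge \alpha\}$ and let $A$ be the event in the statement. Since $W$ is nondecreasing, on $A$ one has $\tau<\infty$ and $W_\tau \le \sigma^2$, so in fact $A = \{\tau<\infty,\ W_\tau \le \sigma^2\}$, and on this event $S_\tau \ge \exp(\lambda\alpha - \phi(\lambda)\sigma^2)$ (using $\lambda\ge0$, $Y_\tau\ge\alpha$, $\phi(\lambda)\ge0$, $W_\tau\le\sigma^2$). By optional stopping for the bounded stopping time $\tau\wedge n$, $\mathbb{E}[S_{\tau\wedge n}]\le 1$; bounding this below by $\exp(\lambda\alpha-\phi(\lambda)\sigma^2)\,\mathbb{P}(A\cap\{\tau\le n\})$ and letting $n\to\infty$ (so $A\cap\{\tau\le n\}\uparrow A$) gives $\mathbb{P}(A) \le \exp\!\big(-\lambda\alpha+\phi(\lambda)\sigma^2\big)$. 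Finally I optimize over $\lambda$: using $\phi(\lambda)\le \tfrac{\lambda^2/2}{1-\lambda/3}$ for $0\le\lambda<3$ (which follows termwise from $2\cdot 3^k\le(k+2)!$) and the choice $\lambda = \alpha/(\sigma^2+\alpha/3)\le 3$, the exponent collapses to $-\tfrac{\alpha^2/2}{\sigma^2+\alpha/3}$; undoing the rescaling replaces this by $-\tfrac{\alpha^2/2}{\sigma^2+b\alpha/3}$, which is the claim.

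The one step needing care is coupling ``$Y_k\ge\alpha$'' with ``$W_k\le\sigma^2$'': one cannot simply stop at the first $k$ with $Y_k\ge\alpha$ and hope $W$ is small there. This is exactly where predictability and monotonicity of $W$ are used — the constraint $W_k\le\sigma^2$ on the event $A$ transfers backward to $W_\tau\le\sigma^2$ — after which the argument is the routine Chernoff/optional-stopping computation together with the two elementary inequalities above.
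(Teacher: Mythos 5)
Your proposal is correct, but note that the paper does not prove this lemma at all: it is imported verbatim by citation (Theorem 1.6 of Freedman 1975, restated as Theorem 10 in the delayed-feedback paper it builds on), so there is no internal proof to match. What you supply is the classical Freedman argument — the pointwise bound $e^{\lambda x}\le 1+\lambda x+(e^{\lambda}-1-\lambda)x^2$ for $x\le 1$, the resulting nonnegative supermartingale $S_k=\exp(\lambda Y_k-\phi(\lambda)W_k)$ (where predictability of $W_k$, i.e.\ $\mathcal{F}_{k-1}$-measurability, is what lets you pull it out of the conditional expectation — you use this implicitly and it is worth stating), optional stopping at $\tau\wedge n$, and the optimization $\phi(\lambda)\le\frac{\lambda^2/2}{1-\lambda/3}$ with $\lambda=\alpha/(\sigma^2+\alpha/3)$ — and each step checks out, including the key observation that monotonicity of $W$ transfers the constraint $W_k\le\sigma^2$ on the event back to $W_\tau\le\sigma^2$. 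Two small points: (i) you correctly add the hypothesis $Y_0=0$, which the paper's statement omits but which is necessary (and holds in the paper's application in its Lemma on the martingale construction); (ii) at the boundary cases $\alpha=0$ (bound is trivially $1$) and $\sigma^2=0$ (where $\lambda=3$ and your series bound on $\phi$ formally degenerates) one should note that $\phi(\lambda)\sigma^2=0$ directly, so the conclusion still holds. With those one-line remarks added, this is a complete, self-contained proof of the cited result, which is arguably a service the paper itself does not provide.
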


\begin{lemma}\label{doob}
\textbf{Concentration of bounded random variables} (Lemma A.1 in~\cite{szita2011agnostic}, Lemma 11 in~\cite{pike2018bandits}) Fix the positive integers $m$, $n$ and let $a,c \in \mathbb{R}$. Let $\mathcal{F}= \{\mathcal{F}_t\}_{t=0}^{\infty}$ be a filtration, $(\rho_t)_{t=1,2,3...n}$ be $\{0,1\}$ valued and $\mathcal{F}_{t-1}$-measurable random variables and $(Z_t)_{t=1,2,3...n}$ be $\mathcal{F}_t$-measurable $\mathbb{R}$ valued random variables, satisfying $\mathbb{E}[Z_t\vert\mathcal{F}_{t-1}]=0$, $Z_t\in[a,a+c]$ and $\sum_{s=1}^n\rho_s \leq m$ with probability one. Then for any $\eta >0$:
\begin{equation*}
    \mathbb{P} \left( \sum_{t=1}^n \rho_t Z_t \geq \eta \right) \leq \exp\left(-\frac{2\eta^2}{c^2m} \right ). 
\end{equation*}
\end{lemma}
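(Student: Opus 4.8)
The plan is to prove Lemma~\ref{doob} by the standard Chernoff/exponential-supermartingale method, adapted to the ``skipped'' partial sums $S_k := \sum_{t=1}^k \rho_t Z_t$; this is essentially Doob's optional-skipping argument combined with Hoeffding's lemma. (One could instead try to deduce it from the Bernstein-type Lemma~\ref{freedman} by taking $Y_k = S_k$, $b=c$ and $W_k \le \tfrac{c^2}{4}\sum_{t\le k}\rho_t \le \tfrac{c^2 m}{4}$, but that route yields a bound with an extra $b\alpha/3$ term in the denominator of the exponent rather than the clean sub-Gaussian rate $\exp(-2\eta^2/(c^2m))$ claimed here, so a direct Hoeffding-style argument is preferable. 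One could also simply cite~\cite{szita2011agnostic,pike2018bandits}.)

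First I would observe that $\{S_k\}_{k=0}^n$ is a martingale with respect to $\{\mathcal{F}_k\}$: since $\rho_t$ is $\mathcal{F}_{t-1}$-measurable and $\mathbb{E}[Z_t\mid\mathcal{F}_{t-1}]=0$, we have $\mathbb{E}[\rho_t Z_t\mid\mathcal{F}_{t-1}] = \rho_t\,\mathbb{E}[Z_t\mid\mathcal{F}_{t-1}] = 0$. Next I would control the conditional moment generating function of each increment $\rho_t Z_t$. When $\rho_t=0$ the increment vanishes and the MGF equals $1$; when $\rho_t=1$, $Z_t$ is mean-zero and supported on an interval of length $c$, so Hoeffding's lemma gives $\mathbb{E}[e^{\lambda Z_t}\mid\mathcal{F}_{t-1}]\le e^{\lambda^2 c^2/8}$ for every $\lambda>0$. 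The two cases combine into the single, $\rho_t$-uniform bound $\mathbb{E}[e^{\lambda \rho_t Z_t}\mid\mathcal{F}_{t-1}]\le \exp(\lambda^2 c^2 \rho_t/8)$.

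With this in hand I would define $M_k := \exp\!\big(\lambda S_k - \tfrac{\lambda^2 c^2}{8}\sum_{t=1}^k \rho_t\big)$ and check it is a nonnegative supermartingale with $M_0 = 1$: since $\rho_k$ and $M_{k-1}$ are $\mathcal{F}_{k-1}$-measurable, $\mathbb{E}[M_k\mid\mathcal{F}_{k-1}] = M_{k-1}\exp(-\tfrac{\lambda^2 c^2}{8}\rho_k)\,\mathbb{E}[e^{\lambda\rho_k Z_k}\mid\mathcal{F}_{k-1}]\le M_{k-1}$, hence $\mathbb{E}[M_n]\le 1$. Only now does the almost-sure constraint $\sum_{t=1}^n\rho_t\le m$ enter: on $\{S_n\ge\eta\}$ we get $M_n\ge \exp(\lambda\eta - \tfrac{\lambda^2 c^2 m}{8})$, so Markov's inequality gives $\mathbb{P}(S_n\ge\eta)\le \exp(-\lambda\eta + \tfrac{\lambda^2 c^2 m}{8})$, and optimizing over $\lambda$ (the minimizer is $\lambda = 4\eta/(c^2 m)$) produces the stated $\exp(-2\eta^2/(c^2 m))$.

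The only genuine subtlety — and the reason the statement is phrased with the predictable weights $\rho_t$ and an a.s.\ cap $m$ — is that the number of ``counted'' terms is itself random; the resolution is to carry $\sum_{t\le k}\rho_t$ inside the exponent of the supermartingale and invoke its upper bound $m$ only at the very last step, rather than attempting to stop the martingale at a random time. The $\mathcal{F}_{t-1}$-measurability of each $\rho_t$ is precisely what lets $\rho_k$ be pulled out of the conditional expectation in the supermartingale check, so I would flag that hypothesis explicitly. Everything else — the martingale property, Hoeffding's lemma, and the one-dimensional optimization over $\lambda$ — is routine.
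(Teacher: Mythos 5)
Your proof is correct. Note, however, that the paper itself does not prove this lemma at all: it is quoted verbatim as Lemma A.1 of Szita--Szepesv\'ari (and Lemma 11 of Pike-Burke et al.) and used as a black box, so there is no in-paper argument to compare against; the comparison is with the cited sources. Those sources prove the bound by Doob's optional skipping theorem: the predictable $\{0,1\}$ weights $\rho_t$ are used to extract the (at most $m$) ``counted'' increments into a new martingale difference sequence, to which Hoeffding--Azuma is applied. Your route sidesteps optional skipping entirely by keeping the weights inside an exponential supermartingale $M_k=\exp\bigl(\lambda S_k-\tfrac{\lambda^2c^2}{8}\sum_{t\le k}\rho_t\bigr)$, using the conditional Hoeffding lemma (valid since $\rho_t$ is $\mathcal{F}_{t-1}$-measurable and $Z_t$ is conditionally mean-zero with range $c$), invoking $\sum_t\rho_t\le m$ only on the event $\{S_n\ge\eta\}$, and optimizing $\lambda=4\eta/(c^2m)$; each step checks out and yields exactly $\exp(-2\eta^2/(c^2m))$. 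This is a clean, self-contained alternative that makes explicit where predictability of $\rho_t$ and the almost-sure cap $m$ enter, whereas the skipping argument buys a reduction to the textbook Hoeffding--Azuma statement at the cost of the (slightly fiddly) stopping-time construction. Your parenthetical observation that deducing the lemma from the paper's Lemma~\ref{freedman} would only give a Bernstein-type exponent with an extra $b\alpha/3$ term, and hence not the stated sub-Gaussian rate, is also accurate.
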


Now, we setup some additional notation. For any active arm $j$ and phase $m$ , let $S_{m,j}$ denote the time in this phase when the algorithm starts playing this arm. Similarly let $U_{m,j}$ denote the time in this phase when the algorithm stops playing this arm. All the phases are distinct and non-overlapping. Except for an initial few plays, reward accrual is instantaneous by construction. We define a filtration $\{\mathcal{G}_s\}_{s=0}^{\infty}$ by setting $\{\mathcal{G}_0\} = \{\Omega, \phi \}$ and defining $\{\mathcal{G}_t\}$ to be the $\sigma$-algebra over $(X_{1},X_2....X_t,J_1,J_2....J_t,d_{1,J_1},d_{2,J_2}....d_{t,J_t},R_{1,J_1},R_{2,J_2}...R_{t,J_t})$.

In the next lemma, we give a constructive proof for the choice of $n_m$ such that the estimated mean reward for an arm $j$ is close to its true mean within a predefined error range with high probability.
\begin{lemma}\label{stocnm}
There exists a positive $n_m$ for which the estimates $\overline{X}_{m,j}$, calculated by the Algorithm~\ref{alg:expectation_known} for the active arm $j$ ( $j\in \mathit{K}_m$ ) and phase $m$, satisfy the following with probability $\geq(1-\frac{2}{T^2})$:
\begin{equation*}
    \overline{X}_{m,j} - \mu_j \leq \tilde{\Delta}_m/2.
\end{equation*}
\end{lemma}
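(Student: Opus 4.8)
The plan is to construct $n_m$ explicitly and then verify the concentration bound via the two martingale inequalities (Lemmas~\ref{freedman} and~\ref{doob}) stated above. First I would decompose the estimation error $\overline{X}_{m,j} - \mu_j$ into two parts. Write $n_m \overline{X}_{m,j} = \sum_{t \in T_j(m)} X_{t,j}$, where $T_j(m)$ is the set of rounds in which arm $j$ was played \emph{and} a reward was accrued. Since each arm is played for $n_m - n_{m-1}$ consecutive rounds in phase $m$ (and $n_m$ total rounds across phases $1,\dots,m$), the key observation is that reward accrual fails only at the \emph{beginning} of each phase's block — specifically, in phase $m'$, after $j$ has been played fewer than $d_{t,j}$ times in the trailing $N$-window. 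Because $\dmax \le N$ and each block has length $n_{m'}-n_{m'-1}$, the number of ``missed'' rounds in phase $m'$ is at most $d_{t,j}$ at that moment, which is stochastic with mean $\expd$ and bound $\dmax$. So the number of genuinely accrued rewards $|T_j(m)|$ satisfies $n_m - D_{m,j} \le |T_j(m)| \le n_m$, where $D_{m,j} = \sum_{m'=1}^m (\text{missed in phase } m')$ is a sum of $m$ i.i.d.-ish bounded random variables with mean $\le m\expd$.

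Next I would split $\overline{X}_{m,j} - \mu_j = \frac{1}{n_m}\big(\sum_{t \in T_j(m)} (X_{t,j} - \mu_j)\big) + \frac{|T_j(m)| - n_m}{n_m}\mu_j$ — roughly, a ``noise'' term and a ``missing mass'' term. Actually cleaner: $\overline{X}_{m,j} = \frac{1}{n_m}\sum_{t \in T_j(m)} X_{t,j}$, and since $X_{t,j} \in [0,1]$ with conditional mean $\mu_j$ on accrual, the gap from $\mu_j$ is controlled by (i) the centered sum $\sum_{t\in T_j(m)}(X_{t,j}-\mu_j)$, which is a martingale difference sum with at most $n_m$ terms in $[a,a+1]$, bounded via Lemma~\ref{doob} (the $\rho_t$ indicators select the accrued rounds, which number at most $n_m$) by $\exp(-2\eta^2/n_m)$; and (ii) the deficit $\frac{n_m - |T_j(m)|}{n_m}\mu_j \le \frac{D_{m,j}}{n_m}$, which I bound by controlling $D_{m,j}$ around its mean $m\expd$ using Bernstein/Freedman (Lemma~\ref{freedman}) applied to the martingale formed by the centered per-phase missed counts, whose predictable variation is $O(m\expd)$ since each missed count is in $[0,\dmax]$ with mean $\expd$ (so $\mathbb{E}[Z^2] \le \dmax \cdot \expd$ or similar). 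This gives a deviation of order $\sqrt{m\expd\log T} + \dmax\log T$ for $D_{m,j}$, hence the deficit term is $O\big(\tfrac{\sqrt{m\expd\log T}}{n_m}\big)$ plus lower order.

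Then I would choose $n_m$ to make both tail probabilities at most $1/T^2$ (union bound gives $2/T^2$) while forcing the total error below $\tilde\Delta_m/2$. Setting $\eta = n_m \tilde\Delta_m/4$ in (i) requires $n_m \gtrsim \frac{\log T}{\tilde\Delta_m^2}$; matching (ii) to $\tilde\Delta_m/4$ requires $n_m \tilde\Delta_m \gtrsim \sqrt{m\expd\log T} + \dmax\log T$, i.e. an additional additive term involving $\frac{\sqrt{m\expd\log T}}{\tilde\Delta_m}$. Recalling that at the relevant phase $2^m = 1/\tilde\Delta_m \asymp 1/\Delta_j$ so $m \asymp \log(1/\Delta_j) \le O(\log T)$, this additive correction is of order $\frac{\sqrt{\expd}\log T}{\tilde\Delta_m}$, which after multiplying by $\Delta_i$ in the regret sum yields the $\sqrt{\log(4/\Delta_i)\,\expd\log T}$ term in Theorem~\ref{thm:expectation_known}. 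So I would propose $n_m = \big\lceil \frac{C\log T}{\tilde\Delta_m^2}\big\rceil + C'\,\frac{\sqrt{m\expd\log T}}{\tilde\Delta_m} + C''\dmax\log T$ for suitable constants (this is presumably the equation labeled~(\ref{eq:stoc1}) referenced in the text), and verify the two tail bounds close.

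The main obstacle is handling the \emph{endogeneity and adaptivity} of the missed-reward counts: the number of rounds without accrual at the start of phase $m$ depends on the realized $d_{t,j}$ values \emph{and} on the algorithm's play pattern, which is itself adaptive to past rewards, so $D_{m,j}$ is not a clean sum of independent variables. The fix is to set up the filtration $\{\mathcal{G}_s\}$ carefully (as already introduced before the lemma) so that the per-phase missed counts, after centering by $\expd$, form a genuine martingale difference sequence with respect to $\{\mathcal{G}_s\}$ — using that $d_{t,j}$ is drawn i.i.d. and independent of the history $\mathcal{G}_{t-1}$ — and then Lemma~\ref{freedman} applies with $b = \dmax$ and $\sigma^2 = O(m\expd\dmax)$ (or a tighter variance proxy if one uses $\mathrm{Var}(d_{t,j}) \le \expd\dmax$). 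Getting the variance proxy right is what separates the $\sqrt{\expd}$ dependence here from the cruder $\dmax$ dependence in the known-support analysis of Section~\ref{sec:support_known}, so I would be careful to bound $\mathbb{E}[Z_j^2 \mid \mathcal{G}_{j-1}]$ by something proportional to $\expd$ rather than $\dmax^2$.
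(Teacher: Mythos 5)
Your overall architecture is the same as the paper's: split the error of $\overline{X}_{m,j}$ into a raw-reward concentration term handled by Lemma~\ref{doob} (with $\rho_t$ selecting the rounds of arm $j$) and a missed-accrual term handled by Freedman's inequality (Lemma~\ref{freedman}) on the filtration $\{\mathcal{G}_s\}$, then solve a quadratic in $\sqrt{n_m}$ to read off $n_m$. The genuine gap is in how you instrument the missed-accrual martingale. You aggregate the misses \emph{per phase}, so your increments are bounded only by $b=\dmax$ and your variance proxy is $\sigma^2=O(m\,\expd\,\dmax)$; Freedman then yields a deviation of order $\sqrt{m\,\expd\,\dmax\log T}+\dmax\log T$ (your stated $\sqrt{m\,\expd\log T}$ is inconsistent with your own proxy), and your proposed $n_m$ accordingly carries an additive $\dmax\log T$ term. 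That defeats the purpose of this lemma: in Section~\ref{sec:expectation_known} the algorithm knows only $\expd$ and $N$ (so it cannot set $n_m$ using $\dmax$), and the bound actually proved and used, (\ref{eq:stocnmval}), namely $n_m \le 1 + 4\log(T)/\tilde{\Delta}_m^2 + 16\log(T)/(3\tilde{\Delta}_m) + 8\sqrt{m\,\expd\log(T)}/\tilde{\Delta}_m$, has no $\dmax$ anywhere; this is exactly what produces the $32\sqrt{\log(4/\Delta_i)\,\expd\log T}$ term of Theorem~\ref{thm:expectation_known} rather than a $\dmax$-type penalty as in Section~\ref{sec:support_known}. The paper avoids $\dmax$ by keeping the martingale at per-round granularity: the missed quantity is the missed \emph{reward} $M_t=\sum_i A_{i,t}\indicator\{S_{i,j}\le t\le U_{i,j}\}\in[0,1]$, so the Freedman increment bound is $b=1$ (giving only the benign $\tfrac{2}{3}\log T$ term), and the predictable variation is bounded in Lemma~\ref{variation} by $\mathbb{E}[M_t^2\mid\mathcal{G}_{t-1}]\le\mathbb{P}(t< S_{i,j}+d_{t,J_t}\mid\mathcal{G}_{t-1})$, which summed over a block telescopes to $\sum_{l\ge 0}\mathbb{P}(l<d)=\expd$, i.e.\ $W_{U_{m,j}}\le m\,\expd$ with no $\dmax$ factor. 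You correctly flag at the end that the variance proxy is the crux, but your fix ($\mathbb{E}[Z^2]\le\expd\,\dmax$ with increment $\dmax$) does not remove the $\dmax$ dependence either from $\sigma^2$ or from the $b\alpha/3$ term; the per-round, reward-weighted construction is the missing idea.

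Two smaller points. First, if you bound the deficit $D_{m,j}$ \emph{around its mean} $m\expd$, the mean itself contributes a bias of order $m\expd/n_m$ to the estimate, which your proposed $n_m$ does not visibly absorb; the paper's decomposition handles the analogous term $\sum_t\mathbb{E}[M_t\mid\mathcal{G}_{t-1}]$ by exploiting its sign in the inequality it proves, so if you keep your two-sided deficit bookkeeping you must account for this term explicitly. Second, the explicit $n_m$ is equation (\ref{eq:stocnmval}), not (\ref{eq:stoc1}), which is the error decomposition.
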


\begin{proof}
Using the above notation, it follows that for each arm $j$:
\begin{equation}\label{eq:stoc1}
    \sum_{i=1}^m\sum_{t=S_{i,j}}^{U_{i,j}} ( X_t - \mu_j ) \leq \sum_{i=1}^m\sum_{t=S_{i,j}}^{U_{i,j}} ( R_{t,J_t} - \mu_j ) - \sum_{i=1}^m\sum_{t=S_{i,j}}^{U_{i,j}} R_{t,J_t}\indicator\{ t \leq S_{i,j} + d_{t,J_t}\} .
\end{equation}
Define $A_{i,t} := R_{t,J_t}\indicator\{ t \leq S_{i,j} + d_{t,J_t}\}$ and also $M_t := \sum_{i=m}^m A_{i,t}\indicator\{ S_{i,j}\leq t \leq U_{i,j} \}$. We rewrite~(\ref{eq:stoc1}) in terms of $M_t$ as:
\begin{equation*}
    \sum_{i=1}^m\sum_{t=S_{i,j}}^{U_{i,j}} ( X_t - \mu_j ) \leq \sum_{i=1}^m\sum_{t=S_{i,j}}^{U_{i,j}} ( R_{t,J_t} - \mu_j ) -\sum_{t=1}^{U_{m,j}} M_t,
\end{equation*}
\begin{equation}\label{eq:71}
    \sum_{i=1}^m\sum_{t=S_{i,j}}^{U_{i,j}} ( X_t - \mu_j ) \leq \sum_{i=1}^m\sum_{t=S_{i,j}}^{U_{i,j}} ( R_{t,J_t} - \mu_j ) +\sum_{t=1}^{U_{m,j}} (\mathbb{E}[M_t\vert G_{t-1}] - M_t )- \sum_{t=1}^{U_{m,j}}\mathbb{E}[M_t\vert G_{t-1}].
\end{equation}
We bound each term individually in~(\ref{eq:71})
From lemmas~\ref{term1} and~\ref{term2} accompanied by a trivial non-negative upper bound for the term 3 in~(\ref{eq:71}), we can write with probability $1-\frac{2}{T^2}$ ( from union bound )
\begin{equation*}
    \sum_{i=1}^m\sum_{t=S_{i,j}}^{U_{i,j}} ( X_t - \mu_j ) \leq \sqrt{n_m \log(T)} + \frac{2}{3}\log(T) + \sqrt{\frac{4\log^2(T)}{9}+4m\mathbb{E}[d]\log(T)}.
\end{equation*}
For each active arm $j \in \mathit{K}_m$,
\begin{equation*}
    \frac{1}{n_m}\sum_{t \in T_j(m)} ( X_t - \mu_j ) \leq \sqrt{\frac{\log(T)}{n_m}} + \frac{2\log(T)}{3n_m} + \frac{1}{n_m}\sqrt{\frac{4\log^2(T)}{9}+4m\mathbb{E}[d]\log(T)} = w_m.
\end{equation*}
Algorithm~\ref{alg:expectation_known} requires $w_m \leq \tilde{\Delta}_m/2$ so that the arm elimination condition holds good. This helps to determine appropriate $n_m$. Let $z= \sqrt{\frac{4\log^2(T)}{9}+4m\mathbb{E}[d]\log(T)}$, then, we need to solve the following quadratic inequality:
\begin{equation*}
    \frac{\tilde{\Delta}_m}{2}n_m - \sqrt{n_m \log(T)} - \frac{2\log(T)}{3} - z \geq 0.
\end{equation*}
The smallest $n_m$ which satisfies the above is given by:
\begin{equation*}
    n_m = \left\lceil\frac{1}{\tilde{\Delta}^2_m}\left( \sqrt{\log(T)} + \sqrt{\log(T) + \frac{4\tilde{\Delta}_m\log(T)}{3} + 2\tilde{\Delta}_m z } \right )^2\right\rceil.
\end{equation*}
Using $(a+b)^2 \leq 2(a^2+b^2)$ and $x=\lceil y \rceil \rightarrow x \leq y+1$:
\begin{equation*}
    n_m \leq 1 + \frac{1}{\tilde{\Delta}^2_m}\left( 4 \log(T) + \frac{8\tilde{\Delta}_m\log(T)}{3} + 4\tilde{\Delta}_mz\right ).
\end{equation*}
We can now substitute $z$ and use inequality $\sqrt{a^2+b^2}\leq (a+b)$, therefore:
\begin{equation*}
n_m \leq 1 + \frac{1}{\tilde{\Delta}^2_m}\left( 4 \log(T) + \frac{16\tilde{\Delta}_m\log(T)}{3} + 8\tilde{\Delta}_m\sqrt{m\mathbb{E}[d]\log(T)}\right ),
\end{equation*}
\begin{equation}\label{eq:stocnmval}
n_m \leq 1 + \frac{4 \log(T)}{\tilde{\Delta}^2_m} + \frac{16\log(T)}{3\tilde{\Delta}_m} + \frac{8\sqrt{m\mathbb{E}[d]\log(T)}}{\tilde{\Delta}_m}.
\end{equation}
This completes the proof.
\end{proof}
\begin{lemma}\label{martingaleproof}
$Y_s = \sum_{t=1}^s \mathbb{E}[M_t\vert G_{t-1}] - M_t)$ for all $s\geq 1$ with $Y_0=0$ is a martingale with respect to the filtration $\{G_s\}^{\infty}_{s=0}$ with increments $Z_s = Y_s-Y_{s-1} = \mathbb{E}[M_s\vert G_{s-1}] - M_s )$, satisfying $\mathbb{E}[Z_s\vert G_{s-1}] = 0$, $Z_s \leq 1$ for all $s\geq 1$.
\end{lemma}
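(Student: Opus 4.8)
The plan is to verify, directly from the definitions, the three assertions bundled in the statement: that $(Y_s)$ is adapted and integrable, that $\mathbb{E}[Y_s\mid\mathcal{G}_{s-1}]=Y_{s-1}$ (equivalently $\mathbb{E}[Z_s\mid\mathcal{G}_{s-1}]=0$), and that $Z_s\le 1$ for all $s\ge 1$. This is the classical Doob-compensator construction, so the proof is short; the only point needing a moment's care is the $[0,1]$-bound on $M_t$.

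First I would record the measurability and boundedness facts for $M_t$. The phase start/stop times $S_{i,j}$ and $U_{i,j}$ that actually contribute to $M_t$ are determined by the algorithm's arm-survival decisions in earlier phases together with the deterministic parameters $\{n_m\}$, hence are $\mathcal{G}_{t-1}$-measurable; the quantities $R_{t,J_t}$, $J_t$ and $d_{t,J_t}$ entering $A_{i,t}=R_{t,J_t}\indicator\{t\le S_{i,j}+d_{t,J_t}\}$ are $\mathcal{G}_t$-measurable by the definition of the filtration. Hence $M_t$ is $\mathcal{G}_t$-measurable, $\mathbb{E}[M_t\mid\mathcal{G}_{t-1}]$ is $\mathcal{G}_{t-1}$-measurable, and therefore $Y_s=\sum_{t=1}^s\big(\mathbb{E}[M_t\mid\mathcal{G}_{t-1}]-M_t\big)$ is $\mathcal{G}_s$-measurable, i.e.\ $(Y_s)_{s\ge 0}$ is adapted. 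Moreover, since the blocks during which arm $j$ is played in distinct phases are disjoint, for any fixed $t$ the sum defining $M_t$ has at most one nonzero summand, so $0\le M_t\le R_{t,J_t}\le 1$; in particular all terms are bounded, every conditional expectation exists, and $\mathbb{E}\lvert Y_s\rvert<\infty$.

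Next I would establish the increment identity. Telescoping gives $Z_s:=Y_s-Y_{s-1}=\mathbb{E}[M_s\mid\mathcal{G}_{s-1}]-M_s$. Taking the conditional expectation given $\mathcal{G}_{s-1}$ and using that the first summand is already $\mathcal{G}_{s-1}$-measurable,
\[
\mathbb{E}[Z_s\mid\mathcal{G}_{s-1}]=\mathbb{E}[M_s\mid\mathcal{G}_{s-1}]-\mathbb{E}[M_s\mid\mathcal{G}_{s-1}]=0,
\]
so $\mathbb{E}[Y_s\mid\mathcal{G}_{s-1}]=Y_{s-1}+\mathbb{E}[Z_s\mid\mathcal{G}_{s-1}]=Y_{s-1}$ for all $s\ge 1$, and $Y_0=0$ by construction; thus $(Y_s)$ is a martingale with respect to $\{\mathcal{G}_s\}$. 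Finally, for the uniform bound, $M_s\ge 0$ yields $Z_s=\mathbb{E}[M_s\mid\mathcal{G}_{s-1}]-M_s\le\mathbb{E}[M_s\mid\mathcal{G}_{s-1}]$, and $M_s\le 1$ yields $\mathbb{E}[M_s\mid\mathcal{G}_{s-1}]\le 1$, hence $Z_s\le 1$ for every $s\ge 1$.

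No step here is a genuine obstacle; the only substantive observation is the $[0,1]$-boundedness of $M_t$, which reduces to the fact that at most one phase is active at any given time, so the defining sum collapses to a single term bounded by $R_{t,J_t}\in[0,1]$. Everything else is routine conditional-expectation bookkeeping. This lemma is exactly the input needed to apply the Generalized Bernstein inequality (Lemma~\ref{freedman}) to the second (martingale) term in~(\ref{eq:71}).
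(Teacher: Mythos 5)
Your proof is correct and follows essentially the same route as the paper's: verify adaptedness of $Y_s$ from the definition of the filtration, obtain $\mathbb{E}[Z_s\mid\mathcal{G}_{s-1}]=0$ by the standard conditional-expectation computation, and bound $Z_s\leq 1$ using that $M_t\in[0,1]$ because at most one phase indicator is active at any time and $R_{t,J_t}\leq 1$. Your treatment is in fact slightly cleaner than the paper's on the last point (making explicit that $Z_s\leq\mathbb{E}[M_s\mid\mathcal{G}_{s-1}]\leq 1$ via $M_s\geq 0$), but there is no substantive difference in approach.
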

\begin{proof}
To show $\{Y_s\}_{s=0}^{\infty}$ is a martingale defined on filtration $\{\mathcal{G}_s\}_{s=0}^{\infty}$, we need to show $Y_s$ is $\{\mathcal{G}_s\}$-measurable for all $s\geq 1$ and $\mathbb{E}[Y_s\vert G_{s-1}] =Y_{s-1}$

By the definition of $\sigma$-algebra, $\{\mathcal{G}_s\}_{s=0}^{\infty}$, $d_{t,J_t},R_{t,J_t}$ are all $\{\mathcal{G}_s\}$-measurable for $t\leq s$. Additionally for phases $i$ where time instance $t$ lie in phases after $i$, $\indicator\{ S_{i,j}\leq t \leq U_{i,j} \}=0$ ( measurable by $\mathcal{G}_0$ ). Hence $\{Y_s\}_{s=0}^{\infty}$ is measurable by $\{\mathcal{G}_s\}_{s=0}^{\infty}$. Now consider the conditional expectation:
\begin{equation*}
    \mathbb{E}[Y_s\vert \{\mathcal{G}_{s-1}\}] = \mathbb{E}\left[\sum_{t=1}^s (\mathbb{E}[M_t\vert G_{t-1}] - M_t) \vert \mathcal{G}_{s-1} \right ],
\end{equation*}
\begin{equation*}
    = \mathbb{E}\left[\sum_{t=1}^{s-1} (\mathbb{E}[M_t\vert G_{t-1}] - M_t) \vert \mathcal{G}_{s-1} \right ] + \mathbb{E}\left[ (\mathbb{E}[M_s\vert G_{s-1}] - M_s) \vert \mathcal{G}_{s-1} \right ],
\end{equation*}
\begin{equation*}
    = \mathbb{E}\left[\sum_{t=1}^{s-1} (\mathbb{E}[M_t\vert G_{t-1}] - M_t) \vert \mathcal{G}_{s-1} \right ]  = Y_{s-1}.
\end{equation*}
Therefore $\{Y_s\}_{s=0}^{\infty}$ is a martingale with respect to the filtration $\{\mathcal{G}_s\}_{s=0}^{\infty}$. Clearly, the increment $Z_s = Y_s - Y_{s-1} = (\mathbb{E}[M_s\vert G_{s-1}] - M_s)$ and $\mathbb{E}[Z_s\vert G_{s-1}] = \mathbb{E}[(\mathbb{E}[M_s\vert G_{s-1}] - M_s)\vert G_{s-1} ] = 0$ 

Note that for any phase $i$, $A_{i,t} \leq 1$ as reward $R_{t,J_t}$ is bounded by $1$. Also, for any time $t$ $M_t \leq 1$ as  $\indicator\{ S_{i,j}\leq t \leq U_{i,j} \}$ is $1$ for only a particular phase $i$ thus $Z_s \leq 1$ for $s \geq 1$. This completes the proof
\end{proof}

\begin{lemma}\label{term1}
Using the notations derived before:
\begin{equation*}
    \mathbb{P}\left( \sum_{i=1}^m\sum_{t=S_{i,j}}^{U_{m,j}}( R_{t,J_t} -\mu_j ) \leq \sqrt{n_m \log(T)} \right) > 1 - \frac{1}{T^2}.
\end{equation*}
\end{lemma}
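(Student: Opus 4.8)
The sum in question ranges over exactly those rounds in phases $1,\dots,m$ during which arm $j$ is played, and by construction of Algorithm~\ref{alg:expectation_known} there are precisely $n_m$ such rounds (arm $j$ is played $n_i-n_{i-1}$ times in phase $i$, and $\sum_{i=1}^m (n_i-n_{i-1})=n_m$). The plan is to view $\sum_{i=1}^m\sum_{t=S_{i,j}}^{U_{i,j}}(R_{t,J_t}-\mu_j)$ as a sum of a martingale difference sequence with respect to $\{\mathcal{G}_s\}$ and to apply a Hoeffding-type tail bound. First I would note that whether arm $j$ is scheduled at time $t$ of phase $i\le m$ is fixed at the start of phase $i$, so the indicator $\rho_t:=\indicator\{J_t=j,\ t\le U_{m,j}\}$ is $\mathcal{G}_{t-1}$-measurable with $\sum_t\rho_t\le n_m$ almost surely. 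On the event $\{J_t=j\}$ the \emph{generated} reward $R_{t,J_t}=R_{t,j}$ is a fresh draw from $\xi_j$ independent of $\mathcal{G}_{t-1}$, so $Z_t:=R_{t,J_t}-\mu_j$ satisfies $\mathbb{E}[Z_t\mid\mathcal{G}_{t-1}]=0$ there, and $Z_t\in[-\mu_j,\,1-\mu_j]$, an interval of width $c=1$ since $\xi_j$ is supported on $[0,1]$.

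Then I would invoke Lemma~\ref{doob} with this $\rho_t$, this $Z_t$, $c=1$, and the count bound $n_m$ in the role of $m$: for any $\eta>0$,
\begin{equation*}
\mathbb{P}\left(\sum_{t=1}^{U_{m,j}}\rho_t Z_t \ge \eta\right)\le \exp\!\left(-\frac{2\eta^2}{n_m}\right).
\end{equation*}
Since $\sum_t\rho_t Z_t=\sum_{i=1}^m\sum_{t=S_{i,j}}^{U_{i,j}}(R_{t,J_t}-\mu_j)$, taking $\eta=\sqrt{n_m\log(T)}$ makes the right-hand side equal to $T^{-2}$, and the claim follows by complementation. (Equivalently, one could apply the Azuma--Hoeffding inequality directly to the martingale $\sum_s Z_s$ from Lemma~\ref{martingaleproof}-style reasoning, or Lemma~\ref{freedman} with a crude variance bound; plain Hoeffding is already tight enough here.)

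The only delicate point is the measurability of $\rho_t$, i.e.\ that the arm played at each step of a phase is determined before that phase's rewards are revealed; this is immediate from the phase structure of Algorithm~\ref{alg:expectation_known}, unlike for index-based methods. I do not anticipate a real obstacle in this lemma: in contrast to the impairment-sensitive term of~(\ref{eq:71}) handled in Lemma~\ref{term2}, here we sum the \emph{generated} rewards $R_{t,J_t}$ rather than the \emph{accrued} rewards $X_t$, so no reasoning about $d_{t,j}$ or the $N$-round window enters — this is the routine concentration step of the three-term decomposition in~(\ref{eq:71}).
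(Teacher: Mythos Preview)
Your proposal is correct and mirrors the paper's proof: both invoke Lemma~\ref{doob} with $\rho_t=\indicator\{J_t=j,\ t\le U_{m,j}\}$, a centered reward increment of range $c=1$, the count bound $n_m$, and $\eta=\sqrt{n_m\log T}$, and both justify the $\mathcal{F}_{t-1}$-measurability of $\rho_t$ via the phased structure of Algorithm~\ref{alg:expectation_known}. The only cosmetic difference is that the paper takes $Z_t=R_{t,j}-\mu_j$ for the \emph{fixed} arm $j$ (and a smaller filtration generated by $(X_1,\dots,X_t,R_{1,j},\dots,R_{t,j})$), so that $\mathbb{E}[Z_t\mid\mathcal{F}_{t-1}]=0$ holds unconditionally rather than only on $\{J_t=j\}$, matching the hypothesis of Lemma~\ref{doob} verbatim.
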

\begin{proof}
We would apply lemma~\ref{doob} to prove the above. Take $n=T$, $\mathcal{F}_t$ as filtration with $\sigma$-algebra on $(X_1,....X_t,R_{1,j}...R_{t,j})_{t=1,2...T}$. $Z_t = R_{t,j}-\mu_j$ and $\rho_t = \indicator\{J_t=j,\;t\leq U_{m,j}\}$. Clearly, $\forall t \in T_j(m),\;\rho_t =1$ and by definition of $T_j(m)$, $\sum_{t=1}^T\rho_t = \vert T_j(m) \vert \leq n_m$. So $\sum_{t\in T_j(m)} (R_{t,j} -\mu_j) = \sum_{t=1}^T \rho_t(R_{t,j}-\mu_j)$ follows.

Additionally, for any $1\leq t \leq T$, $\rho_t = \indicator\{J_t=j,\;t\leq U_{m,j}\}$ is $\mathcal{F}_{t-1}$-measurable. Given all the observations $X_1,X_2....X_{t-1}$ till $(t-1)$ we know which phase does $t$ belong. This is because of the phased the phased nature of Algorithm~\ref{alg:expectation_known}, thus, $\indicator\{t\leq U_{m,j}\}$ is determined. Similarly $J_t=j$ can be determined, establishing $\rho_t$ is $\mathcal{F}_{t-1}$-measurable. $Z_t$ is $\mathcal{F}_{t}$-measurable by definition. Taking $a=-\mu_j$ and $c=1$, we apply lemma~\ref{doob} to get the above result.
\end{proof}
\begin{lemma}\label{variation}
For any $t$, $Z_t= \mathbb{E}[M_t\vert G_{s-1}] - M_t$, then
\begin{equation*}
    \sum_{t=1}^{U_{m,j}} \mathbb{E}[Z_t^2\vert G_{s-1}] \leq m\mathbb{E}[d].
\end{equation*}
\end{lemma}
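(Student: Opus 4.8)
The plan is to read $\sum_{t=1}^{U_{m,j}}\mathbb{E}[Z_t^2\mid\mathcal{G}_{t-1}]$ as the predictable quadratic variation of the martingale of Lemma~\ref{martingaleproof}, bound each conditional second moment by a tail probability of the impairment variable $d$, and then sum these tails block-by-block over the (at most $m$) consecutive play-blocks of arm $j$.

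First I would pass from $Z_t$ to $M_t$. Since $Z_t=\mathbb{E}[M_t\mid\mathcal{G}_{t-1}]-M_t$ has $\mathbb{E}[Z_t\mid\mathcal{G}_{t-1}]=0$, we get $\mathbb{E}[Z_t^2\mid\mathcal{G}_{t-1}]=\operatorname{Var}(M_t\mid\mathcal{G}_{t-1})\le\mathbb{E}[M_t^2\mid\mathcal{G}_{t-1}]$. Because the play-blocks of arm $j$ in distinct phases are disjoint, at most one summand in $M_t=\sum_{i=1}^m A_{i,t}\indicator\{S_{i,j}\le t\le U_{i,j}\}$ is nonzero at a given $t$; and since every realized reward lies in $[0,1]$, squaring can only shrink it, so $M_t^2\le\sum_{i=1}^m\indicator\{t\le S_{i,j}+d_{t,j}\}\,\indicator\{S_{i,j}\le t\le U_{i,j}\}$. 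In particular $M_t=0$ unless $t$ falls inside one of the $m$ blocks of arm $j$.

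Next I would condition on the past. The phased structure of Algorithm~\ref{alg:expectation_known} fixes the order and the lengths of all play-blocks as a function of the observations made so far, so the block endpoints $S_{i,j},U_{i,j}$ are stopping times for $\{\mathcal{G}_s\}$ — this is precisely the observation already used in the proof of Lemma~\ref{term1}. Hence, for $t$ inside arm $j$'s block in phase $i$, the event $\{S_{i,j}\le t\le U_{i,j}\}$ and the offset $t-S_{i,j}$ are $\mathcal{G}_{t-1}$-measurable, whereas $d_{t,j}$ is a fresh draw independent of $\mathcal{G}_{t-1}$ with mean $\mathbb{E}[d]$. Taking conditional expectations term by term (only the block containing $t$ survives),
\[
\sum_{t=1}^{U_{m,j}}\mathbb{E}[Z_t^2\mid\mathcal{G}_{t-1}]\;\le\;\sum_{i=1}^m\ \sum_{t=S_{i,j}}^{U_{i,j}}\mathbb{P}\!\left(d_{t,j}\ge t-S_{i,j}\right)\;\le\;\sum_{i=1}^m\ \sum_{\ell\ge 0}\mathbb{P}(d\ge \ell+1)\;=\;m\,\mathbb{E}[d],
\]
where the middle step reindexes by the within-block offset $\ell=t-S_{i,j}$ and aligns it with the reward-accrual rule of Section~\ref{sec:problem_definition} (the $\ell$-th consecutive replay accrues its reward once $\ell$ in-window repetitions are present, so it is impaired only on $\{d_{t,j}\ge\ell+1\}$), and the last equality is the tail-sum formula for the mean of a nonnegative integer-valued random variable.

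The variance-to-second-moment step, the $R^2\le 1$ bound, and the telescoping of tail probabilities are all routine; the one place that needs genuine care is the conditioning step — verifying that, given $\mathcal{G}_{t-1}$, the whole block geometry at time $t$ (which arm is being played, and how deep into its block we are) is already determined, so the only remaining randomness in $M_t$ is the independent draw $d_{t,j}$. A lesser, purely bookkeeping subtlety is getting the within-block offset indexing to match the exact counting convention in the feedback model, so that the per-block tail sum lands on $\mathbb{E}[d]$ rather than $\mathbb{E}[d]+1$.
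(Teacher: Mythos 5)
Your proposal is correct and follows essentially the same route as the paper's proof: bound $\mathbb{E}[Z_t^2\mid\mathcal{G}_{t-1}]$ by the conditional second moment of $M_t$, use that at most one phase indicator is active and $R_{t,J_t}^2\le 1$, exploit $\mathcal{G}_{t-1}$-measurability of the block endpoints to reduce each term to a tail probability of $d$, and sum the tails per phase to get $\mathbb{E}[d]$, giving $m\mathbb{E}[d]$ overall. The within-block offset bookkeeping you flag (so the tail sum is $\sum_{\ell\ge 0}\mathbb{P}(d\ge \ell+1)=\mathbb{E}[d]$ rather than $\mathbb{E}[d]+1$) is handled the same way in the paper, which uses the strict-inequality form of the impairment indicator at that step.
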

\begin{proof}
\begin{equation*}
\sum_{t=1}^{U_{m,j}} \mathbb{E}[Z_t^2\vert G_{s-1}] = \sum_{t=1}^{U_{m,j}} \mathbb{V}[M_t^2\vert G_{s-1}] \leq \sum_{t=1}^{U_{m,j}} \mathbb{E}[M_t^2\vert G_{s-1}]
 = \sum_{t=1}^{U_{m,j}}\mathbb{E}\left[\left( \sum_{i=1}^m A_{i,t}\indicator \{ S_{i,j} \leq t \leq U_{i,j} \} \right)^2 \vert G_{t-1} \right].
\end{equation*}
For phase $i$, the indicator $\indicator \{ S_{i,j} \leq t \leq U_{i,j} \}=1$ for atmost once. Therefore:
\begin{equation*}
\sum_{t=1}^{U_{m,j}} \mathbb{E}[Z_t^2\vert G_{s-1}] \leq \sum_{t=1}^{U_{m,j}}\mathbb{E}\left[\sum_{i=1}^m A_{i,t}^2\indicator \{ S_{i,j} \leq t \leq U_{i,j} \} \vert G_{t-1} \right ], 
\end{equation*}
\begin{equation*}
= \sum_{i=1}^m\sum_{t=1}^{U_{m,j}}\mathbb{E}\left[\sum_{i=1}^m A_{i,t}^2\indicator \{ S_{i,j} \leq t \leq U_{i,j} \} \vert G_{t-1} \right ],     
\end{equation*}
\begin{equation*}
= \sum_{i=1}^m\sum_{t=S_{i,j}}^{U_{m,j}}\mathbb{E}\left[ A_{i,t}^2\indicator \{ S_{i,j} \leq t \leq U_{i,j} \} \vert G_{t-1} \right ].    
\end{equation*}
As $S_{i,j}$ and $U_{i,j}$ are $G_{t-1}$-measurable, if $\indicator \{ S_{i,j} \leq t \leq U_{i,j} \}=1$, therefore:
\begin{equation*}
\leq \sum_{i=1}^m\sum_{t=S_{i,j}}^{U_{i,j}}\mathbb{E}[A_{i,t}^2\vert G_{t-1}],     
\end{equation*}
\begin{equation*}
= \sum_{i=1}^m\sum_{t=S_{i,j}}^{U_{i,j}}\mathbb{E}[R_{t,J_t}^2\indicator\{ t < S_{i,j} + d_{t,J_t} \} \vert G_{t-1}],    
\end{equation*}
\begin{equation*}
\leq \sum_{i=1}^m\sum_{t=S_{i,j}}^{U_{i,j}}\mathbb{E}[\indicator\{ t < S_{i,j} + d_{t,J_t} \} \vert G_{t-1}],     
\end{equation*}
\begin{equation*}
= \sum_{i=1}^m\sum_{s=0}^{\infty}\sum_{s'=s}^{\infty}\sum_{t=s}^{s'}\mathbb{E}[\indicator\{S_{i,j}=s,\;U_{i,j}=s',\; t < s + d_{t,J_t} \} \vert G_{t-1}], 
\end{equation*}
\begin{equation*}
= \sum_{i=1}^m\sum_{s=0}^{\infty}\sum_{s'=s}^{\infty}\sum_{t=s}^{s'}\indicator\{S_{i,j}=s,\;U_{i,j}=s'\}\sum_{t=s}^{s'}\mathbb{P}( t < s + d_{t,J_t} ),   
\end{equation*}
\begin{equation*}
\leq \sum_{i=1}^m\sum_{s=0}^{\infty}\sum_{s'=s}^{\infty}\sum_{t=s}^{s'}\indicator\{S_{i,j}=s,\;U_{i,j}=s'\}\sum_{l=0}^{\infty}\mathbb{P}( l <  d ),  
\end{equation*}
\begin{equation*}
\leq \sum_{i=1}^m \mathbb{E}[d] = m\mathbb{E}[d].    
\end{equation*}
This completes the proof.
\end{proof}

\begin{lemma}\label{term2}
Using the notations derived before:
\begin{equation*}
    \mathbb{P}\left( \sum_{t=1}^{U_{m,j}}(\mathbb{E}[M_t\vert G_{t-1}] - M_t ) < \frac{2}{3}\log(T) + \sqrt{\frac{4\log^2(T)}{9}+4m\mathbb{E}[d]\log(T)} \right ) > 1-\frac{1}{T^2}.
\end{equation*}
\end{lemma}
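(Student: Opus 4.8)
The plan is to apply the generalized Bernstein inequality for martingales (Lemma~\ref{freedman}) to the martingale $\{Y_s\}_{s\ge 0}$ constructed in Lemma~\ref{martingaleproof}, whose terminal value at $s = U_{m,j}$ is exactly the quantity $\sum_{t=1}^{U_{m,j}}(\mathbb{E}[M_t\mid G_{t-1}] - M_t)$ that we want to control. From Lemma~\ref{martingaleproof}, the increments $Z_s = \mathbb{E}[M_s\mid G_{s-1}] - M_s$ satisfy $\mathbb{E}[Z_s\mid G_{s-1}] = 0$ and $Z_s \le 1$, so we may take $b = 1$ in Lemma~\ref{freedman}. From Lemma~\ref{variation}, the predictable variation process satisfies $W_k = \sum_{t=1}^k \mathbb{E}[Z_t^2\mid G_{t-1}] \le m\expd$ for every $k \le U_{m,j}$, since it is a partial sum of non-negative terms all bounded above by $\sum_{t=1}^{U_{m,j}} \mathbb{E}[Z_t^2 \mid G_{t-1}] \le m\expd$; hence we may take $\sigma^2 = m\expd$, and in particular the event $\{W_{U_{m,j}} \le \sigma^2\}$ holds with probability one.

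Next I would choose the deviation level $\alpha$ to be precisely the right-hand side of the claimed bound, namely $\alpha = \tfrac{2}{3}\log(T) + \sqrt{\tfrac{4\log^2(T)}{9} + 4m\expd\log(T)}$. A direct calculation shows this $\alpha$ is exactly the positive root of the quadratic $\alpha^2 - \tfrac{4}{3}\log(T)\,\alpha - 4m\expd\log(T) = 0$, which (with $b=1$ and $\sigma^2 = m\expd$) rearranges to $\tfrac{\alpha^2/2}{\sigma^2 + b\alpha/3} = 2\log(T)$. Substituting into Lemma~\ref{freedman} therefore gives $\mathbb{P}\big(\exists k : Y_k \ge \alpha \text{ and } W_k \le \sigma^2\big) \le \exp(-2\log T) = 1/T^2$.

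Finally I would discharge the conjunction inside Freedman's event. On the event $\{Y_{U_{m,j}} \ge \alpha\}$ the index $k = U_{m,j}$ witnesses $\exists k : Y_k \ge \alpha$, and $W_{U_{m,j}} \le \sigma^2$ holds almost surely by the variation bound above, so $\{Y_{U_{m,j}} \ge \alpha\} \subseteq \{\exists k : Y_k \ge \alpha \text{ and } W_k \le \sigma^2\}$ and hence $\mathbb{P}(Y_{U_{m,j}} \ge \alpha) \le 1/T^2$. Taking complements and recalling $Y_{U_{m,j}} = \sum_{t=1}^{U_{m,j}}(\mathbb{E}[M_t\mid G_{t-1}] - M_t)$ yields the statement of Lemma~\ref{term2}.

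I do not expect a genuine obstacle here; the only points needing care are (i) that the bound of Lemma~\ref{variation} must be used as an almost-sure bound on every partial sum $W_k$, not merely in expectation, so that the side condition $\{W_k \le \sigma^2\}$ in Lemma~\ref{freedman} is automatic, and (ii) matching the algebra so that $\alpha$ is exactly the root of the quadratic, which is what makes the Bernstein exponent equal $2\log T$ and delivers the clean $1/T^2$ tail that is then combined with Lemma~\ref{term1} via a union bound in the proof of Lemma~\ref{stocnm}.
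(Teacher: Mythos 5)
Your proposal is correct and follows exactly the route the paper takes: apply the Freedman-type inequality of Lemma~\ref{freedman} to the martingale $\{Y_s\}$ of Lemma~\ref{martingaleproof} with $b=1$ and the almost-sure variation bound $\sigma^2 = m\expd$ from Lemma~\ref{variation}. In fact you spell out what the paper leaves implicit, namely that the stated threshold is precisely the positive root of $\alpha^2 - \tfrac{4}{3}\log(T)\,\alpha - 4m\expd\log(T)=0$, giving the exponent $2\log T$ and hence the $1/T^2$ tail, and that the side condition $W_k \le \sigma^2$ holds surely so the complement argument goes through.
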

\begin{proof}
Using Lemma~\ref{martingaleproof}, we know $Y_s = \sum_{t=1}^s \mathbb{E}[M_t\vert G_{t-1}] - M_t)$ for all $s\geq 1$ with $Y_0=0$ is a martingale with respect to the filtration $\{G_s\}^{\infty}_{s=0}$ with increments $Z_s = Y_s-Y_{s-1} = \mathbb{E}[M_s\vert G_{s-1}] - M_s )$, satisfying $\mathbb{E}[Z_s\vert G_{s-1}] = 0$, $Z_s \leq 1$ for all $s\geq 1$. Also from lemma~\ref{variation}, $\sum_{t=1}^s\mathbb{E}[Z_t^2\vert G_{t-1}] \leq m\mathbb{E}[d]$. As lemma~\ref{freedman} guarantees the existence of a $t$ for which the above inequality is satisfied with high probability. 
\end{proof}

\textbf{Proof of Theorem~\ref{thm:expectation_known}:}
\begin{proof}
Proceeding as in the proof of Theorem~\ref{thm:support_known}, we create $4$ mutually exclusive and exhaustive cases with very similar interpretation with the difference being in this setting elimination at the end of the phase is only considered.  We then bound the expected regret conditioned on the events of these cases. Recall that for each sub-optimal arm $i$, let $m_i := \min \{ m \vert \tilde{\Delta}_m < \frac{\Delta_i}{2}\}$, is the first phase where $\tilde{\Delta}_m < \frac{\Delta_i}{2}$. We bound the expected regret by following cases:\\

\textbf{Case (a)} \textit{Arm $i$ is not deleted in phase $m_i$ with the optimal arm $*$ in the set $\mathit{K}_{m_i}$}.\\
The phase $m_i$ is characterized by : $w_{m_i} \leq \frac{\Delta_{m_i}}{2} \leq \frac{\Delta_i}{4}$. Let $E:=\indicator\{\overline{\mu}_i \leq \mu_i + w_{m_i}\}$ and $R:=\indicator\{\overline{\mu}_* \geq \mu_* - w_{m_i}\}$.

If the events $E$ and $R$ hold then the phase-end elimination condition of the Algorithm~\ref{alg:expectation_known} are satisfied, as:
\begin{equation*}
\overline{\mu}_i + w_{m_i} \leq \mu_i + 2w_{m_i} < \mu_i + \Delta_i - 2w_{m_i} \leq \overline{\mu}_*  - w_{m_i}.
\end{equation*}
From the Lemma~\ref{stocnm}, $\mathbb{P}(E > 1-\frac{1}{T^2})$ and $\mathbb{P}(E > 1-\frac{1}{T^2})$ follows. In this case we are interested in conditional regret of $\{E^{\complement} \cup \R^{\complement}\}$ which by Union bound bears the upper bound of $\frac{4}{T^2}$. which is given by:
\begin{equation*}
    \sum_{i\in \mathit{K}'} \frac{4}{T^2} T\Delta_i \leq \sum_{i\in \mathit{K}'} \frac{4\Delta_i}{T}.
\end{equation*}

\textbf{Case (b)} \textit{ Arm $i$ is eliminated at the phase $m_i$ with the optimal arm $* \in \mathit{K}_{m_i}$.}\\ 
By the lemma~\ref{stocnm}, in case the sub-optimal arm $i$ is eliminated in the phase $m_i$ then the maximum number of times it is played is given by~(\ref{eq:stocnmval}). Additionally, we make use of $\Delta_i/4 \leq \tilde{\Delta}_m\leq \Delta_i/2$ and $m_i \leq \log_2\left(\frac{4}{\Delta_i}\right) < 2\log\left(\frac{4}{\Delta_i}\right)$.Therefore:
\begin{equation*}
n_{m_i} \leq   1 + \frac{64 \log(T)}{\Delta_i^2} + \frac{64\log(T)}{3\Delta_i} + \frac{32\sqrt{\log\left(\frac{4}{\Delta_i}\right)\mathbb{E}[d]\log(T)}}{\tilde{\Delta}_i}
\end{equation*},
\begin{equation*}
R_T  \leq  \sum_{i\in\mathit{K}'}\Delta_i\left( 1 + \frac{64 \log(T)}{\Delta_i^2} + \frac{64\log(T)}{3\Delta_i} + \frac{32\sqrt{\log\left(\frac{4}{\Delta_i}\right)\mathbb{E}[d]\log(T)}}{\Delta_i} \right),
\end{equation*}
\begin{equation*}
R_T  \leq  \sum_{i\in\mathit{K}'}\left( \Delta_i + \frac{64 \log(T)}{\Delta_i} + \frac{64\log(T)}{3} + 32\sqrt{\log\left(\frac{4}{\Delta_i}\right)\mathbb{E}[d]\log(T)} \right).
\end{equation*}

The following two cases exactly the same as for the proof of Theorem~\ref{thm:support_known}.

\textbf{Case (c)} \textit{Optimal arm $*$ deleted by some sub-optimal $i$ in the set $\mathit{K}''$}.\\
Recall the events $E$ and $R$ as discussed in \textbf{Case (a)}, the event in which $*$ is eliminated by some sub optimal arm $i$ in $\mathit{K}'' = \{ i\in \mathit{K} \vert \Delta_i >0 \}$, is simply $\{E^{\complement} \cup \R^{\complement}\}$ with role of $*$ and $i$ reversed. Hence the probability of this happening is again upper bounded by $\frac{4}{T^2}$ by similar arguments.Let $m_*$ imply the round where $*$ is eliminated. 

Following the same argument, as the \textbf{Case (c)} of the previous section, arm $i$, which causes elimination of the optimal arm $*$, should satisfy $m_i \geq m_*$. Therefore the regret is upper bounded by:
\begin{equation*}
  R_T \leq  \overset{\max_{j\in\mathit{K}'}m_j}{\underset{m_*=0}{\sum}} \underset{i\in \mathit{K}'':m_i \leq m_*}{\sum} \frac{4}{T^2}.T \underset{j\in \mathit{K}'':m_j\geq m_*}{\max}\Delta_j,
\end{equation*}
\begin{equation*}
    \leq \overset{\max_{j\in\mathit{K}'}m_j}{\underset{m_*=0}{\sum}} \underset{i\in \mathit{K}'':m_i \leq m_*}{\sum} \frac{4}{T}4\tilde{\Delta}_{m_*},
\end{equation*}
\begin{equation*}
    \leq \underset{i\in \mathit{K}''}{\sum} \underset{m_*\geq 0}{\sum} \frac{16}{T} 2^{-m_*},
\end{equation*}
\begin{equation*}
    \leq \underset{i\in \mathit{K}''}{\sum}  \frac{32}{T}.
\end{equation*}

\textbf{Case (d)} \textit{Arm $i\,\in\,\mathit{K}''$ and $\notin \,\mathit{K}'$}.\\ 
 Following gives the upper bound on the regret conditioned on this case:
\begin{equation*}
  R_T \leq \underset{i \in \mathit{K}'': \Delta_i < \lambda}{\max} \Delta_i T .
\end{equation*}
As all the four cases mutually exclusive and comprehensive, we thus, conclude the proof of the Theorem~\ref{thm:expectation_known}.
\end{proof}

\section{Experiments}\label{sec:experiments}

We run three experiments. In the first, we investigate the arm switching behavior of \ucb{}. In the second, we investigate the effect of bucketing when there is no impairment present. In the third, we show how \bie{} performs as a function of the impairment effect. In all these experiments, the cumulative regret curves plotted were averaged over 30 Monte Carlo runs. The bandit instances were generated randomly, unless otherwise noted.

We use a simple setup of $K = 30$ arms and set the reward distributions to be the Bernoulli with randomly chosen biases. The horizon length $T = 5000$. We then run \ucb under three different configurations. In the first, the bandit instance is run as is and there is an unique optimal arm. In the second, the number of optimal arms is increased to $3$, and in the third the number of optimal arms is increased to $7$. Figure~\ref{fig:7} shows the unnormalized counts of \emph{same arm plays} in the past $15$ plays. This was computed by checking how many times the current arm was also played in the past $15$ rounds. As expected, as the number of optimal arms increases, the counts of same arm plays decreases rapidly. This indicates that \ucb{} and other related algorithms may perform poorly in settings with impairment.

\begin{figure}
\centering
	\includegraphics[width=.5\columnwidth]{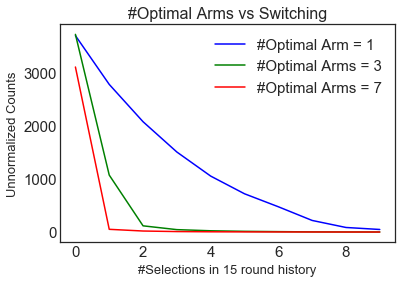}
\caption{Plot of unnormalized counts versus number of same arm plays in the past $15$ rounds by \ucb{} for three different settings.}
\label{fig:7}
\end{figure}

Next, we compared the performance of \bis{} (Algorithm \ref{alg:support_known}) against \aae{} and \ucb{} when there is no impairment. While our algorithms are expected to do worse than \ucb{} and \aae, our aim here is to investigate the role of bucket size. Our instance has $20$ arms, and we choose a time horizon of $5000$ rounds. The distributions associated with each arm follow Bernoulli with a randomly chosen bias. As shown in Figures~\ref{fig:2} and~\ref{fig:3}, the performance of \bis{} interpolates between that of \aae when the bucket size is large ($= 20$) and \ucbr when the bucket size is small ($= 3$).

\begin{figure}
\centering
\includegraphics[width=.45\columnwidth]{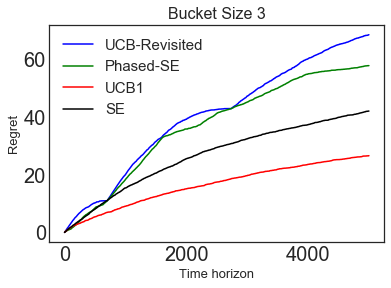}{}
\includegraphics[width=.45\columnwidth]{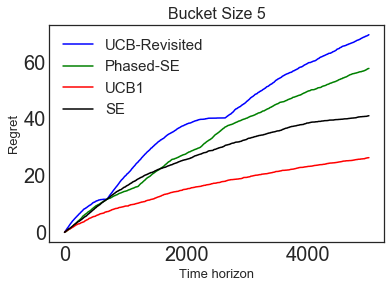}
\caption{Cumulative regret as a function of time: performance of \bis{} for small bucket sizes is closer to the performance of \ucbr.}
\label{fig:2}
\end{figure}

\begin{figure}
\centering
\includegraphics[width=.45\columnwidth]{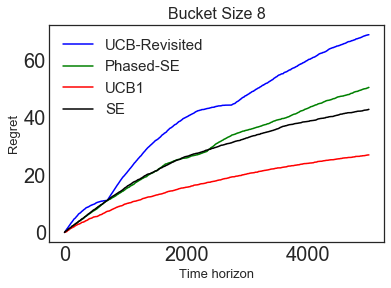}{}
\includegraphics[width=.45\columnwidth]{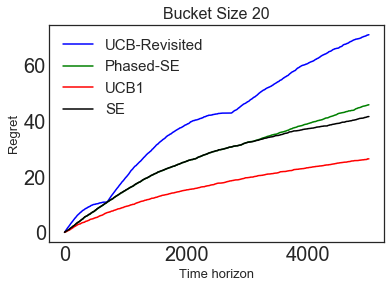}
\caption{Cumulative regret as a function of time: performance of \bis{} for large bucket sizes is closer to the performance of \aae.}
\label{fig:3}
\end{figure}

Finally, we show the performance of \bie{} (Algorithm ~\ref{alg:expectation_known}) for varying levels of impairment. The number of arms in this experiment is $10$. Impairment is stochastic and is simulated using the absolute value normal distribution with means = $\{2,6,10,14\}$ and the standard deviation being proportional to the arm index. The fixed impairment parameter is $N = 20$ and the time horizon is $10000$. From Figure~\ref{fig:4}, we can observe that as the cumulative regret increases as $\expd$ is increased. It can also be shown that the regret of \bie{} grows as O($\sqrt{\expd}$) (omitted).
\begin{figure}
\centering
\includegraphics[width=.5\columnwidth]{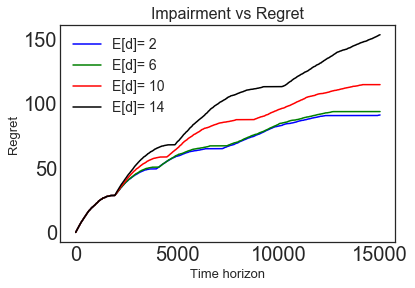}
\caption{Performance (cumulative regret) of \bie{} as the impairment level is varied.}
\label{fig:4}
\end{figure}

\section{Conclusion}\label{sec:conclusion}

In this work, we have addressed an interesting gap in the stochastic multi-armed bandits literature that models impairment, a phenomena where an algorithm only accrues reward if it has tried an arm multiple times in the recent past. This temporal dependence in collecting rewards is markedly different from extant literature on modeling delayed rewards. Our model of impairment captures both endogenous constraints (such as the algorithm being able to collect a reward only on multiple tries) and exogenous constraints (such as in distributed settings where only a subset of arms can be played for extended periods of time). Under this model, we develop new algorithms that either rotate between feasible arms in a round-robin fashion or play the same arm consecutively and show how their performance is impacted by impairment. Crucial to our algorithm designs is the concept of bucketing, that not only allows a further tradeoff in exploration and exploitation, but also allows us to meet arbitrary impairment restrictions. It also has uses in distributed online learning settings. Some future directions include proving tight lower bounds for the problem class, generalizing impairment to depend on past actions as well as rewards, and extending the bucketing strategy to the scenario where expected impairment is known.

\bibliographystyle{plainnat} 
\bibliography{impaired}
\end{document}